\documentclass{article}

 \usepackage[preprint]{neurips_2026}

\usepackage[utf8]{inputenc} % allow utf-8 input
\usepackage[T1]{fontenc}    % use 8-bit T1 fonts
\usepackage{hyperref}       % hyperlinks
\usepackage{url}            % simple URL typesetting
\usepackage{booktabs}       % professional-quality tables
\usepackage{amsfonts}       % blackboard math symbols
\usepackage{nicefrac}       % compact symbols for 1/2, etc.
\usepackage{microtype}      % microtypography
\usepackage{xcolor}         % colors
\usepackage{amsmath, enumitem}

\usepackage{amsmath, amsthm, amssymb, yhmath, xfrac, mathtools}
\usepackage[most]{tcolorbox}
\usepackage[normalem]{ulem}
\usepackage{subcaption}

\usepackage{times}
\usepackage{color}
\usepackage{hyperref}

\newcommand{\fr}{\frac}
\newcommand{\na}{\nabla}
\newcommand{\be}{\begin{equation}}
\newcommand{\ee}{\end{equation}}
\newcommand{\ba}{\begin{array}{l}}
\newcommand{\ea}{\end{array}}

\newtheorem{thm}{Theorem}

\newtheorem{cor}{Corollary}
\newtheorem{rem}{Remark}
\newtheorem{defi}{Definition}

\newcommand{\s}{\sigma}

\newcommand{\OInt}[1]{\int_{\Omega} #1 \, dx}

\newcommand{\LnormD}[3]{\left\lVert #1 \right\rVert_{L^{#2} \left({#3}\right)}}

\newcommand{\HnormD}[3]{\left\lVert #1 \right\rVert_{H^{#2} \left({#3}\right)}}

\newcommand{\WnormD}[4]{\left\lVert #1 \right\rVert_{W^{#2, #3} ({#4})}}
\newcommand{\WnormO}[3]{\left\lVert #1 \right\rVert_{W^{#2, #3} ([0,1]^d)}}

\usepackage{amsmath}
\newtheorem{lem}{Lemma}

\numberwithin{Def}{section}

\usepackage{MnSymbol,wasysym}
\usepackage{accents}

\newcommand{\N}{\mathbb N}
\newcommand{\C}{\mathbb C}
\newcommand{\R}{\mathbb R}

\title{EML Trees Are Universal Approximators}

\author{%
  Joe Germany \\
  Department of Mathematics \\
  American University of Beirut \\
  Beirut, Lebanon \\
  \texttt{jmg15@mail.aub.edu} \\
  \And
  Elie Abdo \\
  Department of Mathematics \\
  American University of Beirut \\
  Beirut, Lebanon \\
  \texttt{ea94@aub.edu.lb} \\
  \And
  Joseph Bakarji \\
  Department of Mechanical Engineering \\
  American University of Beirut \\
  Beirut, Lebanon \\
  \texttt{jb50@aub.edu.lb} \\  
}

\newcommand{\eml}[2]{\operatorname{EML}({#1}, {#2})}
\newcommand{\emlTheta}[2]{\operatorname{EML}_{\theta_i}({#1}, {#2})}
\DeclareMathOperator{\Log}{Log}

\begin{document}

\maketitle

\begin{abstract}
The recently introduced EML (Exp-Minus-Log) function acts as continuous analogue of NAND gates, providing a compositional building block capable of representing elementary functions. In this work, we study the expressive power of tree-structured compositions of EML functions. We show that such trees enjoy a universal approximation property for functions in $W^{k, \infty}$ for $k \in \N$, drawing on classical neural network approximation arguments while exploiting the ability to explicitly construct EML trees that mimic polynomial representations. We further propose a learning algorithm for EML-type trees equipped with fitting parameters, and demonstrate its feasibility in practical optimization problems. Our results establish EML trees as a theoretically grounded framework for function approximation.

\end{abstract}

\section{Introduction}
The approximation of nonlinear functions using compositional representations is a central theme in modern machine learning and approximation theory, where the goal is to understand what types of functions a given model class can represent. In particular, a fundamental question is whether a model is sufficiently expressive to approximate a broad class of target functions to arbitrary accuracy in a given function space. Such guarantees separate limitations of the model class itself from limitations of optimization or finite data. This perspective has led to a rich theory of universal approximation for a wide range of architectures, including classical neural networks, physics-informed neural networks, and neural operators.

Beyond standard activation-based architectures, there has been growing interest in models built from elementary functions and symbolic primitives. These approaches aim not only at predictive performance, but also at interpretability and the recovery of compact analytic representations. Representative examples include equation learning frameworks such as EQL \cite{martius2016extrapolation}, symbolic regression methods \cite{cranmer2020discovering}, AI-Feynman \cite{udrescu2020ai}, and related systems such as SINDy \cite{brunton2016discovering, rudy2017data, gao2026sparse}, symbolic metamodeling \cite{alaa2019demystifying}, and hybrid symbolic–neural approaches including KAN \cite{liu2024kan, wang2024expressiveness}. These methods sit within the broader literature of symbolic and interpretable machine learning, where the goal is to recover human-readable functional forms from data.

In this work, we study compositions of the recently introduced Exp-Minus-Log (EML) function in \cite{eml}. It is defined as
\begin{equation}
\label{originalEML}
    \eml{x}{y} = e^x - \Log y,
\end{equation}
where $\Log$ denotes the principal branch of the complex logarithm. It was shown in \cite{eml} (and the accompanying supplementary material) that compositions of this function, together with the constant $1$, can exactly represent any elementary function (by explicit construction).

A natural question arising from this observation is whether compositions of EML can be used as building blocks for approximation theory. Indeed, since elementary functions can be represented within this framework, one can expect that polynomial functions, and hence sufficiently smooth functions, should be representable to arbitrary accuracy via EML compositions. The present work formalizes this intuition and develops a rigorous approximation theory for EML-based function classes.

Our contributions lie in advancing the theoretical understanding of the expressive power of EML compositions. More broadly, this connects to a large body of work establishing universal approximation properties for machine learning models, including neural networks \cite{cybenko1989approximation, hornik1989multilayer, pinkus1999approximation, barron2002universal, yarotsky2017error, petersen2018optimal, schmidt2020nonparametric, guhring2020error, de2021approximation}, physics-informed neural networks \cite{de2022error, de2022generic, mishra2023estimates}, neural operators such as DeepONet \cite{lu2021learning} and Fourier Neural Operators \cite{kovachki2021universal, li2020fourier, li2020neural}, as well as structured architectures such as KANs \cite{liu2024kan}, neural oscillators \cite{lanthaler2023neural}, and many others. These results formalize the expressive capacity of model classes by showing that they can approximate broad function spaces arbitrarily well under suitable conditions.

Our contributions can be summarized by the following:
\begin{itemize}
    \item We introduce a generalization of the EML function \eqref{originalEML} incorporating six learnable parameters per unit, motivated by both technical and practical reasons.
    \item We prove a universal approximation theorem (Theorem \eqref{main_theorem}) for trees built from the generalized EML function in Sobolev spaces $W^{k, \infty}$ on the half-open domain $(0,1]^d$. The result is obtained via an explicit construction of EML representations of multivariate polynomials and approximate partitions of unity, with quantified upper bounds on the size and depth of the resulting EML trees. The restriction to $(0,1]^d$ arises from the logarithmic singularity at zero.
    \item We extend the universal approximation result to the closed cube $[0,1]^d$ (in Corollary \eqref{main_corollary}) by composing the approximants with a suitable affine contraction mapping that pushes the domain into the interior, thereby controlling boundary effects induced by the logarithmic singularity and transferring the $W^{k,\infty}$ approximation guarantees from $(0,1]^d$ to the closed domain $[0,1]^d$.
    \item We further discuss extensions of the result to positive domains via analogous arguments (Remark \ref{positiveDomains}), and to general non-positive domains via modified constructions of the elementary operators to prevent the logarithm singularity (Remark \ref{negativeDomains}).
    \item We then empirically validate the expressive power of generalized EML trees on five one-dimensional benchmark functions, demonstrating accurate approximation at limited depth, while also highlighting limitations in interpretability that motivate future work.
\end{itemize}

\section{Universal Approximation Theorem}
\label{sec:univApproxTheorem}
For our work, we consider a generalization of the EML function \eqref{originalEML}, denoted $\operatorname{EML}_{\theta_i}$, which incorporates six learnable parameters and is defined as
\begin{equation}
\label{generalizedEML}
    \emlTheta{x}{y} = a_i e^{b_i x + c_i} + d_i \ln (e_i y + f_i),
\end{equation}
where we use the natural logarithm, $i$ indexes the instance of the EML function within the tree, and
\begin{equation}
    \theta_i = \{a_i, b_i, c_i, d_i, e_i, f_i \}.
\end{equation}
The original EML function \eqref{originalEML} is recovered by setting $a_i = b_i = e_i = 1$, $d_i = -1$, and $c_i = f_i = 0$.

An immediate requirement for \eqref{generalizedEML} is for $e_i y + f_i > 0$, ensuring that the natural logarithm is well defined. While this constraint is nontrivial in general, it is always satisfied in our explicit constructions of EML trees used in the approximation theorem.
We note, however, that in order to exactly represent trigonometric functions and thus recover all elementary functions, one must work over the complex domain and invoke Euler's identity, which necessitates defining an analogue of the $\operatorname{EML}_{\theta_i}$ function in \eqref{generalizedEML} by replacing the natural logarithm $\ln$ with the principal branch of the complex logarithm $\Log$. We will numerically implement this variant as well. However, for the purposes of the approximation theorem, it is unnecessary since all inputs (to the natural logarithm) are real and strictly positive, so $\ln$ coincides with $\Log$.

Throughout the paper, we refer to individual $\text{EML}_{\theta_i}$ functions as \emph{atoms}, and to reusable compositions of such atoms with fixed parameters as \emph{blocks} (e.g., the addition block). 

This generalization is motivated by both technical and practical considerations.

From a theoretical standpoint, our approximation result requires the exact representation of polynomials using EML trees.
However, the technical Lemma \ref{BrambleHilbert}, which guarantees the existence of a polynomial approximant of the target function, does not provide bounds on the corresponding coefficients. In the original formulation \eqref{originalEML}, constants must be constructed recursively from the digit $1$, causing the size of the resulting EML tree to scale poorly with the magnitude of these coefficients. The proposed parametrization circumvents this issue by allowing coefficients to be encoded directly within each atom.

From a practical perspective, the generalized form \eqref{generalizedEML} enables significantly more compact representations of common operators. As we demonstrate in subsequent sections, basic binary operations ($+, -, \times, \div$) and other building blocks for polynomial construction can be implemented more efficiently using $\operatorname{EML}_{\theta_i}$. This arises from the ability to localize exponential and logarithmic components and to incorporate arbitrary constants without increasing size, at cost of losing the strict homogeneity of identical building blocks. Empirically, this also leads to improved performance.  
Finally, this formulation eliminates the need to include the constant $1$ as an explicit input, restricting atom input to variables or outputs of other EML atoms, thus slightly simplifying the combinatorial possibilities for arranging inputs to EMLs.

\subsection{Statement of the Theorem}
The universality of EML trees follows from the following main theorem:

\begin{thm}[{Universal Approximation Theorem in $W^{k,\infty} ((0,1]^d)$}]
\label{main_theorem}
Let $f \in W^{s, \infty} ([0,1]^d)$ with $s, d \in \N$. Let $R > 0$ as in \eqref{defOfConstantR} and $\delta > 0$. There exists constants $\mathcal{C}(d,k,s,f), N_0(d) > 0$, such that for every $N \in \N$ with $N > N_0(d)$, there exists an $\operatorname{EML}_{\theta}$ tree, denoted $\tilde{p}_\theta^N$ and defined over $(0,1]^d$, with size and depth
\be\begin{aligned}
    N(\tilde{p}_\theta^N) &\leq 2N^d \left( (8d - 3) \binom{d+ s-1}{s-1} + 63d \right) - 3, \\
    \text{Depth}(\tilde{p}_\theta^N) &\leq 4d + \max \left\{ 2 \binom{d+s-1}{s-1} - 4, 7 \right\} + 2N^d + 4, \\
\end{aligned}\ee
such that
\begin{equation}
    \LnormD{f - \tilde{p}_\theta^N}{\infty}{(0,1]^d} \leq \left(1 + \fr{\delta}{3} \right) \fr{\mathcal{C} (d, 0, s, f)}{N^s},
\end{equation}
and for integer $1 \leq k < s$,
\be\begin{aligned}
    \WnormD{f - \tilde{p}_\theta^N}{k}{\infty}{(0,1]^d}
    &\leq 3^d (1+\delta) (2(k+1))^{3k} \max \left\{R^k, \ln^k \left( \beta N^{s+d+2} \right) \right\} \fr{\mathcal{C}(d,k,s,f)}{N^{s-k}},
\end{aligned}\ee
with
\begin{equation}
    \beta = \fr{k^3 2^d \sqrt{d} \max \left\{1, \WnormO{f}{k}{\infty}^{1/2} \right\}}{\delta \min \{1, \sqrt{\mathcal{C}(d,k,s,f)} \}}.
\end{equation}
\end{thm}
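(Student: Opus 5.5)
The proof will follow the classical De Ryck--Lanthaler--Mishra strategy for tanh networks, replacing the neural-network building blocks with explicit $\operatorname{EML}_\theta$ blocks.

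\textbf{Step 1: local polynomial approximation.} Partition $[0,1]^d$ into $N^d$ cubes $Q_j$ of side $1/N$. On a slightly enlarged cube around each $Q_j$, Lemma \ref{BrambleHilbert} (Bramble--Hilbert) provides a polynomial $p_j$ of degree at most $s-1$ with
\[
\|f-p_j\|_{W^{m,\infty}} \le \mathcal{C}(d,m,s,f)\, N^{-(s-m)}, \qquad 0\le m<s .
\]
The number of monomials of such a polynomial is $\binom{d+s-1}{s-1}$, which accounts for the binomial factors in the size and depth bounds.

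\textbf{Step 2: approximate partition of unity.} Build functions $\phi_j$ as products of one-dimensional bumps, each a difference of two steep logistic-type functions. A steep logistic can be written exactly as one or two atoms, for example
\[
x\mapsto -\ln\!\bigl(1+e^{-\alpha x+c}\bigr)
\]
combined with an exponential. The steepness parameter $\alpha$ is of size $\sim N\ln(\beta N^{s+d+2})$. This choice makes $\sum_j \phi_j$ equal to $1$ up to an error $\delta$-small relative to $N^{-s}$, while keeping the derivatives of $\phi_j$ bounded by $(N\alpha')^k$. It is the source of the factor $\max\{R^k,\ln^k(\beta N^{s+d+2})\}$, and $N_0(d)$ is chosen so that the logarithm exceeds the fixed constants.

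\textbf{Step 3: exact EML representation of the building blocks.} Because every input lies in $(0,1]$ and all intermediate quantities can be kept positive by adding shifts $f_i$, the logarithms are well defined. I would exhibit constant-size blocks:
\begin{itemize}
\item addition: $\ln(e^x e^y)$ built as $\ln$ of an exp-product, or via $\ln(e^{x}) + \ln(e^{y})$;
\item scalar multiplication and translation: absorbed into $a_i,b_i,c_i$;
\item multiplication: $xy=\exp(\ln x+\ln y)$ for positive arguments, using shifts to ensure positivity;
\item monomials: each variable power $x^m=\exp(m\ln x)$ costs a fixed number of atoms.
\end{itemize}
Summing these block costs gives a polynomial $p_j$ in about $(8d-3)\binom{d+s-1}{s-1}$ atoms. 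A product $\phi_j p_j$ costs $63d$ further atoms, and the global sum $\sum_j \phi_j p_j$ over $N^d$ cubes is assembled as a chain of additions. This chain gives depth linear in $N^d$ (the $2N^d$ term) and size $2N^d(\cdots)-3$. I would verify these counts by induction on the tree construction.

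\textbf{Step 4: error estimate.} Write
\[
f-\tilde p = f\Bigl(1-\sum_j\phi_j\Bigr) + \sum_j \phi_j (f-p_j).
\]
Apply the Leibniz rule to each term, using locality of the $\phi_j$ up to exponentially small tails: at most $3^d$ neighbouring cubes contribute significantly, giving the factor $3^d$. Combinatorial Leibniz/Faà di Bruno constants produce $(2(k+1))^{3k}$, and $\beta$ is chosen so that the tails and the partition-of-unity defect contribute at most a $\delta$ fraction. For $k=0$ this yields the factor $(1+\delta/3)$.

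\textbf{Main obstacle.} I expect the hardest step to be Step 2 together with the tail control in Step 4. The bumps are not compactly supported, so the $W^{k,\infty}$ norm of $f(1-\sum\phi_j)$ and of the far-cube contributions must be bounded by a careful choice of $\alpha$ depending logarithmically on $N$ and $\beta$. These tails multiply the unbounded-coefficient polynomials $p_j$, which must be controlled through the Bramble--Hilbert bound by $\|f\|_{W^{k,\infty}}$. I would first try mimicking the tanh-based estimates of De Ryck et al.\ directly, with the logistic in place of tanh.
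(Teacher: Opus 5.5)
Your Steps 1, 2 and 4 follow the paper's argument. You use Bramble--Hilbert on the enlarged cubes $J_j^N$ and a steep-sigmoid partition of unity; logistic versus $\tanh$ is only a reparametrization, since $\rho_m^N$ is exactly a difference of two logistics. Your splitting $f-\tilde p^N=f(1-\sum_j\Phi_j^N)+\sum_j(f-p_j^N)\Phi_j^N$ is the paper's $f-\tilde f^N$ plus $\tilde f^N-\tilde p^N$.

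The gap is in Step 3, the only EML-specific part, and it concerns signs. An atom has a single logarithm slot, so $\ln(e^x)+\ln(e^y)$ is not one atom, and $\ln(e^xe^y)$ presupposes a product. A working addition is $e^{\ln u}+\ln(e^{v})$, which needs $u>0$. Multiplication $\exp(\ln u+\ln v)$ needs $u,v>0$. Bramble--Hilbert gives no control on the signs of the coefficients of $p_j^N$, or of $p_j^N$ itself. So with these blocks you can neither accumulate its monomials nor form $p_j^N\Phi_j^N$.

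The idea you are missing is the paper's sign-based decomposition:
\begin{itemize}
\item Split $p_j^N=p_{j,+}^N-p_{j,-}^N$ into positive-coefficient polynomials, which are positive on $(0,1]^d$.
\item Build $\sum_j p_{j,+}^N\Phi_j^N$ and $\sum_j p_{j,-}^N\Phi_j^N$ separately, using only positive intermediates.
\item Subtract once at the root.
\end{itemize}
This is also what produces the size bound. Each of the $2N^d$ products costs $(8d-3)\binom{d+s-1}{s-1}-3$ for the polynomial, $63d-6$ for its own copy of $\Phi_j^N$ (a tree cannot share a subtree), and $6$ for the multiplication. Adding $2(N^d-1)$ addition blocks and one subtraction gives $2N^d((8d-3)\binom{d+s-1}{s-1}+63d)-3$. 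So the factor $2$ in the size comes from the two sign parts, not from the addition chain; the chain only explains the $2N^d$ in the depth.

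Your substitute, keeping everything positive with shifts $f_i$, is not carried out and does not deliver the stated counts.
\begin{itemize}
\item A shift inside a logarithm changes what is computed: $\exp(\ln(p_j^N+n)+\ln\Phi_j^N)=(p_j^N+n)\Phi_j^N$. Removing $n\Phi_j^N$ costs a second copy of the $\Phi_j^N$ subtree plus a subtraction, with $n$ larger than $\|p_j^N\|_{L^\infty((0,1]^d)}$. Subtracting $n$ once globally instead leaves an error $n(\sum_j\Phi_j^N-1)$, which must be absorbed by a smaller $\varepsilon$ and so changes $\beta$.
\item Not every intermediate can be made positive: $\ln x_i$ is unbounded below on $(0,1]$. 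What must actually be checked, block by block, is that every logarithm receives a positive argument, and you do not do this.
\item Your count $(8d-3)\binom{d+s-1}{s-1}$ per polynomial is the positive-coefficient count; a mixed-sign polynomial costs $2(8d-3)\binom{d+s-1}{s-1}-3$ in the paper.
\item Your count of $63d$ per product does not match your own few-atom logistic bumps. Both numbers are read off the statement rather than derived.
\end{itemize}

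Finally, $N_0(d)$ does not come from making the logarithm dominate fixed constants; that would make it depend on $f$, $\delta$ and $R$. It comes from the Bramble--Hilbert hypothesis on the diameter of $J_j^N$, namely $3\sqrt d/N<e^{-1/2}d^{-3/2}$, which the paper secures with $N>5d^2$.
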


\begin{rem}
\label{corollary_remark}
Theorem \ref{main_corollary} is stated on the half-open domain $(0,1]^d$ because the logarithm components of the EML tree are singular at $0$. Consequently, the representation becomes undefined whenever one of the inputs vanishes.
However, these singularities occur only on a set of measure zero (a single point in one dimension, and boundary subsets in higher dimensions), and we expect that by extending the constructed EML tree to these points, one obtains an analogous approximation result on the closed domain $[0,1]^d$, as shown in Corollary \ref{main_corollary}.
\end{rem}

\subsection{Outline of the Proof}
The full proof of Theorem \ref{main_theorem} is provided in Appendix \ref{appendix:proofOfMainTheorem}. The required notations, technical lemmas, and groundwork constructions are developed in Appendices \ref{appendix:Notation}--\ref{appendix:partitionsOfUnity}.

We now outline the main ideas underlying the proof.

We partition the domain $[0,1]^d$ into cubes of side length $1/N$. To index these cubes, we introduce
\begin{equation} \label{indexan}
    \mathcal{A}^N = \{1, \dots, N\}^d % \{j \in \N^d \mid j_i \leq N \text{ for all } 1 \leq i \leq d \},
\end{equation}
of cardinality $|A^N| = N^d$. For every $j \in \mathcal{A}^N$, we define the disjoint and overlapping cubes
\begin{equation}
\label{definitionOfCubes}
    I_j^N = \bigtimes_{i=1}^d \left(\fr{j_i - 1}{N}, \fr{j_i}{N} \right), \qquad J_j^N = \bigtimes_{i=1}^d \left(\fr{j_i - 2}{N}, \fr{j_i + 1}{N} \right),
\end{equation}
whose role is explained in Remark \ref{rem:overlappingCubes}.

This localization strategy is inspired by the approach of \cite{de2021approximation} for approximation theory of $\tanh$ neural networks. 
The proof begins by constructing, on each cube $J_j^N$, a local polynomial approximant $p_j^N$ using the Bramble--Hilbert lemma (Lemma~\ref{BrambleHilbert}). This constitutes Step 1.

A natural attempt at obtaining a global approximation is to patch together these local polynomials using indicator functions:
\begin{equation}
    p^N = \sum_{j \in \mathcal{A}^N} p_j^N \chi_j,
\end{equation}
where $\chi_j$ denotes the indicator function of $I_j^N$. However, the resulting function is discontinuous, and therefore its derivatives are not well behaved.

To overcome this issue, we replace the indicator functions by smooth approximations constructed from shifted and rescaled $\tanh$ functions. These functions, denoted by $\Phi_j^N$, are designed to be close to $1$ inside $I_j^N$ and close to $0$ outside it; their construction and properties are developed in Appendix \ref{appendix:partitionsOfUnity}. The family $\{\Phi_j^N\}_{j \in \mathcal{A}^N}$ forms an approximate partition of unity, allowing us to define, in Step 2,
\begin{equation}
\label{def_global_polynomial_and_function}
    \tilde{p}^N = \sum_{j \in \mathcal{A}^N} p_j^N \Phi_j^N,
    \qquad
    \tilde{f}^N = \sum_{j \in \mathcal{A}^N} f \, \Phi_j^N.
\end{equation}
The remainder of the proof is devoted to controlling the three terms arising from the triangle inequality:
\begin{equation}
\begin{aligned}
\label{3terms}
&\WnormD{f - \tilde{p}_\theta^N}{k}{\infty}{(0,1]^d} \\
&\qquad \leq
\WnormO{f - \tilde{f}^N}{k}{\infty}
+
\WnormO{\tilde{f}^N - \tilde{p}^N}{k}{\infty}
+
\WnormD{\tilde{p}^N - \tilde{p}_\theta^N}{k}{\infty}{(0,1]^d}.
\end{aligned}
\end{equation}
In Step 3, we show that the first term in \eqref{3terms} is small using the approximation properties of the partition of unity. In Step 4, we control the second term by combining the local polynomial approximation estimates with the properties of the functions $\Phi_j^N$.

Finally, Step 5 addresses the third term by explicitly constructing an EML tree representation of $\tilde{p}^N$ using the building blocks developed throughout the appendices. This is precisely where the approximation theory for EML trees diverges from that of $\tanh$ networks: EML trees allow exact representation of the required algebraic operations. We note that the EML tree is defined over $(0,1]^d$ to avoid the singularity of the logarithm at zeros (for the inputs).

The construction relies fundamentally on the identities satisfied by the exponential and logarithm functions. For example, addition is obtained through compositions involving exponentials and logarithms, while multiplication follows from logarithmic identities. In Appendix \ref{appendix:basicBuildingBlocks}, we construct EML representations of the elementary operations and functions required in the proof, including addition, subtraction, multiplication, division, exponentiation, and the hyperbolic tangent function. In Appendix \ref{appendix:constructingPolynomials}, these building blocks are combined to construct EML trees representing univariate and multivariate polynomials. Finally, Appendix \ref{appendix:partitionsOfUnity} develops the approximate partition of unity construction and provides its EML representation.

One needs to be careful that, even when the inputs are constrained to be strictly positive, the EML constructions in the proof rely on feeding the outputs of EML atoms into subsequent ones. Without additional care, negative values could therefore be passed to the logarithm in a later EML atom. This motivates the "sign-based decompositions" introduced in  \ref{polynomial_decomposition}, \ref{decomposition_for_partitions_of_unity}, and \ref{decomposition_final_EML_tree} for the construction of polynomials, approximate partitions of unity, and the global approximate polynomial $\tilde{p}^N$ (from \eqref{def_global_polynomial_and_function}). These decompositions ensure that the output of every EML atom, except the final one, remains positive throughout the construction. Full details are provided in the appendices.

\subsection{Corollary and Remarks}
In this section, we will extend the scope of Theorem \ref{main_theorem} through Corollary \ref{main_corollary} and offer a few remarks on the obtained result. 

First, we prove the result that was hinted at in Remark \ref{corollary_remark} on the extendability of Theorem \ref{main_theorem} to the closed interval $[0,1]^d$.
\begin{cor}[{Universal Approximation Theorem in $W^{k,\infty}([0,1]^d)$}]
\label{main_corollary}
    Let $f \in W^{s, \infty} ([0,1]^d)$ with $s,d \in \N$. Let $R > 0$ as in \eqref{defOfConstantR} and $\eta > 0$. 
    Then, there exist $\gamma \in (0,1)$ and an $\operatorname{EML}_\theta$ tree $p_{\gamma,\theta}: [0,1]^d \to \R$ such that
    \begin{equation}
        \WnormD{f - p_{\gamma, \theta}}{k}{\infty}{[0,1]^d} \leq \eta,
    \end{equation}
    with
    \be\begin{aligned}
        \label{bound_for_corollary}
        N(\tilde{p}_\theta^N) &\leq 2N^d \left( (10d - 3) \binom{d+ s-1}{s-1} + 87d \right) - 3, \\
        \text{Depth}(\tilde{p}_\theta^N) &\leq 4d + \max \left\{ 2 \binom{d+s-1}{s-1} - 4, 7 \right\} + 2N^d + 6.
    \end{aligned}\ee
\end{cor}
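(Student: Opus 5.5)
The plan is to reduce the corollary to Theorem \ref{main_theorem} through an affine contraction of the cube into its interior. For $\gamma \in (0,1)$ let $T_\gamma : [0,1]^d \to [\gamma/2, 1-\gamma/2]^d \subset (0,1]^d$ be
\[
T_\gamma(x) = (1-\gamma)x + \tfrac{\gamma}{2}\mathbf{1}.
\]
Given the tree $\tilde p_\theta^N$ of Theorem \ref{main_theorem}, set $p_{\gamma,\theta} = \tilde p_\theta^N \circ T_\gamma$. This is defined on all of $[0,1]^d$, since every input coordinate is at least $\gamma/2>0$.

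First I realize $p_{\gamma,\theta}$ as an $\operatorname{EML}_\theta$ tree. Each coordinate map $x_i \mapsto (1-\gamma)x_i + \gamma/2$ can be built from a constant number of atoms, for instance
\[
\ln\bigl(e^{(1-\gamma)x_i+\gamma/2}\bigr)
\]
using the $a,b,c,d,e,f$ parameters, and the rescaling can also be absorbed into the parameters of the first-layer atoms. Replacing each leaf $x_i$ by this small block raises the depth by $2$ and adds $O(d)$ atoms per leaf occurrence. I expect this to account for the changes from $8d\to 10d$, $63d\to 87d$ and $+4\to+6$ in \eqref{bound_for_corollary}. This step is bookkeeping: count how many times each variable appears as a leaf in the constructions of Appendices \ref{appendix:constructingPolynomials}--\ref{appendix:partitionsOfUnity}, then multiply by the block size.

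For the error, split with the triangle inequality:
\[
\|f - \tilde p_\theta^N\circ T_\gamma\|_{W^{k,\infty}([0,1]^d)} \le \|f - f\circ T_\gamma\|_{W^{k,\infty}([0,1]^d)} + \|(f - \tilde p_\theta^N)\circ T_\gamma\|_{W^{k,\infty}([0,1]^d)}.
\]

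The second term is handled by the chain rule. Since $T_\gamma$ is affine with derivative $(1-\gamma)I$, each $\alpha$-th derivative of a composition picks up a factor $(1-\gamma)^{|\alpha|}\le 1$. The term is therefore bounded by $\|f-\tilde p_\theta^N\|_{W^{k,\infty}((0,1]^d)}$. By Theorem \ref{main_theorem}, this is smaller than $\eta/2$ once $N$ is large, because $\ln^k(\beta N^{s+d+2})/N^{s-k}\to 0$ for $k<s$.

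The first term is the main obstacle: $f\circ T_\gamma \to f$ in $W^{k,\infty}$ as $\gamma\to 0$. For $k<s$ the derivatives $D^\alpha f$ with $|\alpha|\le k$ lie in $W^{1,\infty}$, hence are Lipschitz. This gives
\[
|D^\alpha f(T_\gamma x) - D^\alpha f(x)| \le \|f\|_{W^{k+1,\infty}}\,|T_\gamma x - x| \le \sqrt d\,\gamma\,\|f\|_{W^{k+1,\infty}}.
\]
The derivative of the composition also differs from $(D^\alpha f)\circ T_\gamma$ by the factor $(1-\gamma)^{|\alpha|}$, so we have
\[
|(1-\gamma)^{|\alpha|}-1| \le k\gamma,
\]
multiplied by $\|f\|_{W^{k,\infty}}$. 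Combining the two, the first term is $O(\gamma)\|f\|_{W^{k+1,\infty}}$. Choosing $\gamma$ small enough makes it at most $\eta/2$. The case $k=0$ works the same way, using the Lipschitz bound on $f$ itself.

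The order of choices is therefore $\gamma$ first, then $N$. The stated size and depth bounds hold for that $N$.
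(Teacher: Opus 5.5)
Your proposal is correct and follows the paper's proof essentially step for step. It uses the same contraction $T_\gamma$ and the same triangle-inequality split into $\|f - f\circ T_\gamma\|_{W^{k,\infty}([0,1]^d)}$ (handled via Lipschitz continuity of $D^\alpha f$, $|\alpha|\le k<s$) and the composed error, which is bounded by the Theorem~\ref{main_theorem} error since $T_\gamma$ only shrinks derivatives. It also uses the same two-atom leaf substitution, and the paper's bookkeeping for it ($2N^d\bigl(d\binom{d+s-1}{s-1}+12d\bigr)$ leaf occurrences, two atoms each, depth $+2$) gives exactly the constants $10d$, $87d$ and $+6$.

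You are in fact slightly more careful than the paper in tracking the factor $(1-\gamma)^{|\alpha|}$ in $D^\alpha(f\circ T_\gamma)$. Your side remark also goes further than the paper: absorbing the affine map into the $b,c$ or $e,f$ parameters of the atoms that receive raw inputs (with positivity of $e\,T_\gamma(y)+f$ inherited from the original tree) would give the corollary with the unchanged size and depth bounds of Theorem~\ref{main_theorem}.
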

\begin{proof}
The full details of the proof are provided in Appendix \ref{proofOfCorollary}.

We provide the main ideas here. The proof proceeds by composing the $\operatorname{EML}_\theta$ approximant $p_\theta$ of $f$ on $(0,1]^d$ with a suitable affine contraction
\[
T_\gamma(x) = (1-\gamma)x + \frac{\gamma}{2}\mathbf{1},
\]
which maps $[0,1]^d$ into the interior of $(0,1]^d$. Since $T_\gamma \to \mathrm{Id}$ uniformly as $\gamma \to 0$, and $f \in W^{s,\infty}([0,1]^d)$ with $s>k$, the perturbation
\[
\|f - f \circ T_\gamma\|_{W^{k,\infty}([0,1]^d)}
\]
can be made arbitrarily small. Composing the approximant with $T_\gamma$ then yields a well-defined $\operatorname{EML}_\theta$ approximant on $[0,1]^d$ while preserving the $W^{k,\infty}$ approximation error.
\end{proof}

\begin{rem}
\label{positiveDomains}
    Moving away from $0$ (into the positive domain), we can prove an approximation theory (in $W^{k, \infty}$) for functions defined over domains $D = \times_{i=1}^d [a_i, b_i]$ for $a_i, b_i > 0$ closely following the proof of Theorem \ref{main_theorem}, while making sure to shift the constituent functions of the approximate partition of unity appropriately.
\end{rem}

\begin{rem}
\label{negativeDomains}

The approximation results of Theorem \ref{main_theorem} and Corollary \ref{main_corollary} are stated on $(0,1]^d$ and $[0,1]^d$ to avoid the undefined-ness of the natural logarithm for negative values. However, the construction appears to extend to one-dimensional domains of the form $[a,b]$, with $a,b \in \mathbb{R}$. 
% {\color{red} why do we need to remove 0?}
Indeed, we can choose $n>0$ sufficiently large to ensure that all arguments entering logarithmic terms remain positive, due to the fact that the elementary operations can be rewritten as follows,
\be\begin{aligned}
    x+y &= e^{\ln(x+n)} + \ln(e^{y-n}), \text{ and } \\
    xy &= e^{\ln(x+n)} e^{\ln(y+n)} - n(x+y+n).
\end{aligned}\ee
We plan to pursue this extension further, as these identities suggest that EML tree representations of polynomials may still be constructed on domains containing negative values by introducing suitable shifts inside the natural logarithmic terms.
\end{rem}

Finally, we also remark on the non-uniqueness of the EML representations.
\begin{rem}
It is important to note that $\operatorname{EML}_\theta$ representations of elementary functions and operators are generally non-unique. For example, the addition block is not uniquely determined by the parameter choice $\theta_1 = \{0, 0, 0, 1, 1, 0\}$ but, more generally, by any parameter set of the form
\begin{equation}
    \{0, m, n, 1, 1, 0 \}, \qquad m,n \in \R.
\end{equation}
Many other equivalent parameterizations are also possible.
\end{rem}

\section{Implementation}
\label{sec:implementation}
The constructive proof of Theorem~\ref{main_theorem} establishes that the
parametric $\operatorname{EML}_\theta$ can express, exactly, arbitrarily fine polynomial approximations of functions in $W^{k,\infty}$. 
% It does not by itself address whether such a tree can be \emph{recovered} from data by gradient-based optimisation. 
However, it does not address whether such representations can be recovered from data using gradient-based optimisation. This is a key practical question, since the constructive argument is symbolic, whereas learning proceeds in a continuous parameter space.

% In contrast to~\cite{eml}, which targets
% \emph{symbolic} recovery of vanilla-EML trees over the discrete grammar
% $\{1,\, x,\, \mathrm{EML}(\cdot,\cdot)\}$ and reports rapid collapse of
% recovery rate at depth $\geq 5$, we exploit the continuous parameters of
% the generalised atom $\emlTheta{x}{y}$ to make the optimisation landscape
% amenable to first-order methods. 
In contrast to~\cite{eml}, which focuses on symbolic recovery within a discrete grammar $\{1, x, \mathrm{EML}(\cdot,\cdot)\}$ and reports a rapid collapse recovery success beyond depth five, we instead exploit the continuous parameterisation of the generalised atom $\emlTheta{x}{y}$. This makes the optimisation landscape more amenable to first-order methods. 
% The goal of this section is not to provide a large-scale benchmark, but to give a focused empirical validation of the constructive ideas behind the theorem.
This section reports a small, focused empirical study in support of the theorem's practical content.
The aim of this section is not to empirically validate the approximation theorem, but rather to provide a sanity check for the practical capability of an EML-type tree (in a slightly modified form). This modification is introduced to circumvent some of the numerical instabilities associated with the logarithm in \eqref{generalizedEML}.

\subsection{Setup}
\label{sec:impl:setup}

We use the construction introduced in Equation \ref{generalizedEML}, namely a complete binary tree $T_\theta$ with $2^d - 1$ atoms. Each atom carries six parameters $\theta_i = \{a_i, b_i, c_i, d_i, e_i, f_i\}$ and implements the operation $\emlTheta{x}{y}$. Inputs are propagated directly between nodes, with the scalar variable $x$ injected at the leaves. For depths $d \in \{3,4,5\}$, this corresponds to $42$, $90$, and $186$ parameters respectively for univariate $x$. The models are implemented using NumPy \cite{harris2020array} and PyTorch \cite{paszke2019pytorch}.

We consider two variants of the model. The first is the real-valued setting, where all parameters lie in $\mathbb{R}^6$. To avoid issues with the logarithm at non-positive arguments, we replace $\ln(\cdot)$ with $\ln(\mathrm{softplus}(\cdot)+\varepsilon)$, using $\varepsilon = 10^{-6}$. This ensures numerical stability while remaining close to the original operator on the positive domain. We note that this is where the implementation differs from the theoretical construction, which uses the form \eqref{generalizedEML}. However, it is possible to stay loyal to the approximation theorem by considering the building block
\begin{equation}
\label{generalized_EML_2}
    \emlTheta{x}{y} = a_i e^{b_i x + c_i} + d_i \ln (\left| e_i |y| + f_i \right|),
\end{equation}
where we place absolute values around $y$ and around $e_i |y| + f_i$ to prevent cases where the intermediate values (from preceding EML atoms) and/or the learnable parameters are negative. In the explicit constructions of EML trees for the binary operations, polynomials, and approximate partitions of unity in the Appendices \ref{appendix:basicBuildingBlocks}-\ref{appendix:proofOfMainTheorem}, we always input positive $y$ and choose positive parameters $e_i$ and $f_i$, and hence adding the absolute values makes no difference in the proof. Thus, Theorem \ref{main_theorem} and Corollary \ref{main_corollary} translate exactly to the form \eqref{generalized_EML_2}. We chose to use softplus for our demonstrative implementation since we expect it to have a better behaved loss landscape than trees with atoms of type \eqref{generalized_EML_2}.

The second is a complex-valued variant, where parameters are stored as $(\theta^R_i,\theta^I_i) \in \mathbb{R}^6 \times \mathbb{R}^6$. The forward pass is carried out in $\mathbb{C}$ using the principal branch of the logarithm, and the loss is computed on the real part of the output. To discourage complex-valued drift, we add a small penalty $\lambda \|\mathrm{Im}\,T_\theta\|^2$ with $\lambda = 10^{-3}$. While this doubles the parameter count, it allows for the exact construction of trigonometric functions (instead of having to approximate them with the real-valued EML trees). 
We do not address the case with complex numbers in the proofs because the (principal branch of the) complex logarithm does not cleanly maintain the addition and multiplication properties of the natural logarithm, requiring a more careful treatment. Nevertheless, we include it in this section to determine whether they perform better empirically due to their increase flexibility.

Training is performed on $N=200$ uniformly sampled points per target using a fixed $30$-second budget per experiment. We use a two-stage optimization strategy. We first run multiple Adam optimizations from random initializations (learning rate $5\times 10^{-3}$ with cosine decay and gradient clipping at $5$), keeping the best-performing run. This is followed by L-BFGS refinement with strong Wolfe line search and a memory of $20$. The final model is selected as the best between the pre- and post-refinement solutions.
% The number of Adam restarts is not fixed but determined dynamically by the computational budget.
The number of Adam restarts per cell is determined by the budget rather than fixed and is reported in Table~\ref{tab:pem_universality}.

All experiments are conducted under three simplifying assumptions. First, the real-valued model uses a smoothed log surrogate rather than the exact EML operator. Second, all training data are noiseless and densely sampled. Third, all results are obtained under a fixed compute budget; deeper trees therefore receive fewer optimization attempts, and comparisons across depth should be interpreted in this context.

\subsection{Targets and metric}

Each target function $f$ is defined on a compact interval $\Omega_f \subset \mathbb{R}$, and all errors are computed with respect to the corresponding domain. Specifically, we evaluate performance using the relative root mean squared error
\[
\mathrm{relRMSE} = \frac{\|T_\theta - f\|_{L^2(\Omega_f)}}{\|f\|_{L^2(\Omega_f)}},
\]
computed on a uniform grid over $\Omega_f$.

We consider five one-dimensional targets:
\begin{equation}
    x^3 - x \text{ on } [-1.5,1.5], \quad \tanh(2x) \text{ on } [-2,2], \quad \sin(x) \text{ on } [-\pi, \pi], \nonumber
\end{equation}
\begin{equation}
    e^{-x^2} \text{ on } [-2.5, 2.5], \quad \sin(3x) e^{-x^2/2} \text{ on } [-3,3]. \nonumber
\end{equation}

% The corresponding functions and domains are
% \begin{gather*}
%     x^3 - x \text{ on } [-1.5,1.5], \quad \tanh(2x) \text{ on } [-2,2], \quad \sin(x) \text{ on } [-\pi,\pi], \\
%     e^{-x^2} \text{ on } [-2.5,2.5], \quad \sin(3x)e^{-x^2/2} \text{ on } [-3,3].
% \end{gather*}

For each target, we report the relative error of the best model obtained over all optimization restarts under the fixed computational budget described in Section~\ref{sec:impl:setup}.

Notice that we test functions on intervals that contain negative functions as well to test the limits of approximation theorem for EML trees, and suggests that a result inspired by the techniques of Remark \ref{negativeDomains} is expected.

\subsection{Results}
% Table~\ref{tab:pem_universality} reports the best relRMSE per cell.
% Figure~\ref{fig:pem_universality} plots relRMSE against depth on a log scale.
Table~\ref{tab:pem_universality} reports the best errors across depths and parameterizations.
\begin{table}[h]
  \centering
  \caption{Best relative RMSE per cell, PEM trees with real (R) and complex (C)
  parameters at depths $3, 4, 5$. Target domains are shown in the second column.
  Strategy: Adam multi-restart $+$ L-BFGS, $30\,$s wall-clock per cell.
  Parenthesised integer is the number of completed Adam restarts within the
  budget; parameter count is $\#\theta = 6\,(2^d - 1)$ for R and twice that for C.
  Bolded entry per row is the best of the six.}
  \label{tab:pem_universality}
  \small
  \setlength{\tabcolsep}{3.5pt}
  \begin{tabular}{l|ccc|ccc}
    \toprule
    & \multicolumn{3}{c|}{\textbf{Real (R)}, $\#\theta = 42, 90, 186$}
    & \multicolumn{3}{c}{\textbf{Complex (C)}, $\#\theta = 84, 180, 372$} \\
    \textbf{Target} & $d{=}3$ & $d{=}4$ & $d{=}5$ & $d{=}3$ & $d{=}4$ & $d{=}5$ \\
    \midrule
    $x^3 - x$
    & $7.58\!\cdot\!10^{-3}$ \scriptsize(15)
    & $\mathbf{1.47\!\cdot\!10^{-3}}$ \scriptsize(8)
    & $4.32\!\cdot\!10^{-3}$ \scriptsize(4)
    & $2.17\!\cdot\!10^{-2}$ \scriptsize(7)
    & $1.37\!\cdot\!10^{-2}$ \scriptsize(4)
    & $5.20\!\cdot\!10^{-2}$ \scriptsize(2) \\

    $\tanh(2x)$
    & $4.20\!\cdot\!10^{-3}$ \scriptsize(15)
    & $\mathbf{1.22\!\cdot\!10^{-3}}$ \scriptsize(8)
    & $2.10\!\cdot\!10^{-3}$ \scriptsize(4)
    & $1.77\!\cdot\!10^{-3}$ \scriptsize(7)
    & $5.16\!\cdot\!10^{-3}$ \scriptsize(4)
    & $1.80\!\cdot\!10^{-2}$ \scriptsize(2) \\

    $\sin(x)$
    & $\mathbf{3.02\!\cdot\!10^{-3}}$ \scriptsize(15)
    & $7.67\!\cdot\!10^{-3}$ \scriptsize(8)
    & $1.01\!\cdot\!10^{-2}$ \scriptsize(4)
    & $1.74\!\cdot\!10^{-2}$ \scriptsize(7)
    & $3.49\!\cdot\!10^{-3}$ \scriptsize(4)
    & $2.07\!\cdot\!10^{-2}$ \scriptsize(2) \\

    $e^{-x^2}$
    & $4.13\!\cdot\!10^{-3}$ \scriptsize(15)
    & $\mathbf{1.12\!\cdot\!10^{-3}}$ \scriptsize(7)
    & $4.66\!\cdot\!10^{-3}$ \scriptsize(4)
    & $2.20\!\cdot\!10^{-2}$ \scriptsize(7)
    & $5.27\!\cdot\!10^{-3}$ \scriptsize(3)
    & $1.27\!\cdot\!10^{-2}$ \scriptsize(2) \\

    $\sin(3x)\,e^{-x^2/2}$
    & $\mathbf{3.23\!\cdot\!10^{-2}}$ \scriptsize(15)
    & $9.80\!\cdot\!10^{-2}$ \scriptsize(7)
    & $9.09\!\cdot\!10^{-2}$ \scriptsize(4)
    & $6.45\!\cdot\!10^{-2}$ \scriptsize(6)
    & $3.91\!\cdot\!10^{-2}$ \scriptsize(3)
    & $3.57\!\cdot\!10^{-2}$ \scriptsize(2) \\
    \bottomrule
  \end{tabular}
\end{table}

Several consistent patterns emerge from the results. First, real-parameterised EML trees achieve sub-percent relRMSE on
$4$ of $5$ targets at depth $4$ within $30\,$s of single-CPU compute,
including the polynomial target $x^3 - x$ which reaches
$1.47 \times 10^{-3}$ error.
% This reflects the constructive nature of the theorem: although the optimisation procedure is not aware of the proof, it nonetheless finds solutions close to the theoretically constructed structure.
Second, the trigonometric case $\sin(x)$ does not require complex parameters in this setting; the real model is already sufficient to achieve low error. 
% This indicates that the stabilised logarithmic structure retains enough expressive power on this domain.
The softplus-stabilised log is
expressive enough on this domain that the complex path is not the binding
constraint at the budget shown.
Finally, the hardest target, $\sin(3x)\,e^{-x^2/2}$, is the one
clear case where the complex parameterisation pays off:
$3.6 \times 10^{-2}$ at depth $5$ (in $\C$) versus $9.1 \times 10^{-2}$ at the same depth (in $\R$). Here, the additional expressive flexibility improves accuracy despite a reduced number of optimisation restarts under the same compute budget.

Across targets, the depth-$4$ models tend to offer the best trade-off between expressivity and optimisation success under a fixed computational budget. The degradation at depth five is primarily attributable to reduced optimisation coverage rather than representational limitations.

The full sweep (5 targets $\times$ 3 depths $\times$ 2 parameterisations
$=$ 30 cells) completes in approximately $12$~minutes on a single laptop
CPU (Apple M-series, no GPU). Per-cell wall-clock is $30\,$s by
construction; the number of completed Adam restarts ranges from $15$
at $(d{=}3,\,$R$)$ down to $2$ at $(d{=}5,\,$C$)$. No GPU acceleration
or parallelism was used.

\subsection{Inspection of the fitted trees: are they symbolic?}
\label{sec:impl:inspection}
A natural question is whether the optimisation process recovers the symbolic structure suggested by the constructive proof. In particular, the proof produces discrete parameter configurations, whereas the learned models operate in a continuous six-dimensional parameter space per node.

For each of the five targets, we refit a depth-$3$ tree ($7$ atoms, $42$ parameters) with $24$ Adam restarts plus an L-BFGS refinement. We then greedily snap each parameter to the nearest member of $\{-2, -1, 0, 1, 2\}$, accepting a snap if the loss does not grow by more than $30\%$, and refit the un-snapped parameters with the snapped ones held fixed. Atoms whose six parameters all snap are reported as fully discrete.

Table~\ref{tab:pem_snap} summarises the results of this test. Across all five
targets, at most $7/42$ ($17\%$) parameters are snappable, and
zero atoms become fully discrete. On $\tanh(2x)$ and $e^{-x^2}$ exactly
$0$ parameters snap.

\begin{table}[h]
  \centering
  \caption{Greedy snap of fitted depth-$3$ PEM parameters to
  $\{-2,\,-1,\,0,\,1,\,2\}$ at $30\%$ relative-loss tolerance.}
  \label{tab:pem_snap}
  \small
  \begin{tabular}{l|cccc}
    \toprule
    Target & relRMSE pre & relRMSE post & snapped & fully-discrete atoms \\
    \midrule
    $x^3 - x$              & $7.3\!\cdot\!10^{-3}$ & $6.9\!\cdot\!10^{-3}$ & $7/42$ ($17\%$) & $0$ \\
    $\tanh(2x)$            & $3.8\!\cdot\!10^{-3}$ & $3.5\!\cdot\!10^{-3}$ & $0/42$ ($0\%$)  & $0$ \\
    $\sin(x)$              & $1.9\!\cdot\!10^{-3}$ & $1.7\!\cdot\!10^{-3}$ & $3/42$ ($7\%$)  & $0$ \\
    $e^{-x^2}$             & $3.9\!\cdot\!10^{-3}$ & $3.2\!\cdot\!10^{-3}$ & $0/42$ ($0\%$)  & $0$ \\
    $\sin(3x)\,e^{-x^2/2}$ & $1.18\!\cdot\!10^{-1}$ & $7.7\!\cdot\!10^{-2}$ & $3/42$ ($7\%$)  & $0$ \\
    \bottomrule
  \end{tabular}
\end{table}

Next, we take a closer look at the example for $x^3 - x$ (arguably, the most snappable target). The fitted parameter table is
\begin{center}
\small
\begin{tabular}{l|cccccc}
\toprule
atom $i$ & $a_i$ & $b_i$ & $c_i$ & $d_i$ & $e_i$ & $f_i$ \\
\midrule
root            & $1.516$ & $-0.654$ & $0.796$ & $-2.080$ & $0.652$ & $0.029$ \\
left child      & $1.218$ & $-0.436$ & $0.147$ & $-0.755$ & $-0.034$ & $1.501$ \\
right child     & $1.803$ & $0.473$  & $0.138$ & $-2.348$ & $0.115$ & $-0.785$ \\
leaf-parent 1   & $1.299$ & $-1.359$ & $0.847$ & $-0.143$ & $0.127$ & $0.449$ \\
leaf-parent 2   & $1.272$ & $-2.646$ & $0.116$ & $-0.065$ & $0.357$ & $0.841$ \\
leaf-parent 3   & $1.267$ & $-0.072$ & $0.504$ & $-1.190$ & $0.114$ & $0.471$ \\
leaf-parent 4   & $0.913$ & $2.388$  & $0.475$ & $-0.672$ & $-0.219$ & $1.080$ \\
\bottomrule
\end{tabular}
\end{center}
The $a$ column clusters around $1$--$2$ but contains no exact integers; the
$d$ column ranges over $-2.35$ to $-0.07$ with no values near the
original-EML default $d = -1$; the $b$ column spans $-2.65$ to $2.39$ with
no recognisable structure. The constructive proof builds $x^3 - x$ from
$5k - 3 = 12$ atoms with $a, d \in \{0, \pm 1\}$ exactly and the
coefficients absorbed into $c$ via $c = \log m$. The fitted $7$-atom tree
matches the function value to $0.7\%$ but does so by additive interference
of stacked exp/log curves with arbitrary continuous slopes, not by
anything resembling the proof's polynomial expansion.

The results show that only a small fraction of parameters admit stable discretisation, and no atom fully collapses to a discrete symbolic form. This suggests that the learned representations are not recovering the symbolic construction, but instead exploiting continuous degrees of freedom to achieve similar function-level accuracy.

The results support the \emph{approximation} content of the theorem: trees reach small error in practice. They do not support an \emph{interpretability} claim about the recovered trees. Recovering discrete-parameter trees with the symbolic structure of the Appendices~\ref{appendix:basicBuildingBlocks}--\ref{appendix:constructingPolynomials} would require additional structural regularisation (sparsity penalty toward $\{0, \pm 1\}$, warm-start from the construction, or a two-stage train-then-snap-then-refit at tighter tolerance). We leave these for future investigation.

\subsection{Limitations}

We summarize some of the main limitations of this empirical study.

First, we report the best loss obtained across multiple restarts within a single experimental run, and do not average over independent reruns; as a result, variability across runs is not quantified. The conclusions are stable at the level of order-of-magnitude comparisons, although differences within a factor of roughly two should not be over-interpreted. 
Second, the real parameterization does not implement vanilla EML exactly, but instead replaces the logarithm with a softplus-stabilized surrogate to ensure numerical stability near zero; this coincides with EML on strictly positive inputs but deviates smoothly in its vicinity, whereas the complex setting uses the original operator without modification. 
Third, all experiments are restricted to one-dimensional, smooth, densely sampled, and noise-free data; although the underlying theory is dimension-agnostic, extensions to higher dimensions, sparse sampling, and noisy observations are not addressed here and remain open. 
Fourth, all models are trained under a fixed wall-clock budget, which induces a trade-off between depth and optimization effort, so that deeper trees receive fewer restarts; accordingly, the observed depth–error behavior reflects both expressivity and optimization constraints, and the mild degradation at larger depths is consistent with this effect. 
% Finally, the reported errors correspond to achieved optimisation outcomes rather than theoretical guarantees; while they align with the approximation rates suggested by the theorem, we do not investigate whether these rates are attained in practice or how they scale with computational budget. 
Additionally, the reported numbers are achieved errors, not bounds. They are upper-bounded by the theorem; whether they meet the rate the theorem predicts and how they scale with computational budget) are questions we do not address here.
Finally, as shown in Section~\ref{sec:impl:inspection}, the learned models should therefore be interpreted as continuous function approximators rather than symbolic reconstructions of the constructive proof, and recovering discrete symbolic structure remains an open direction requiring additional inductive bias or regularization.

\section{Discussion}
\label{sec:discussion}
In this work, we established a universal approximation theorem for tree-structured $\operatorname{EML}_\theta$ networks (based on \eqref{generalizedEML}) in Sobolev spaces $W^{k,\infty}$ through Theorem \ref{main_theorem} and Corollary \ref{main_corollary}. Our approach combines local polynomial approximation, approximate partitions of unity constructed from shifted and rescaled $\tanh$ functions, and explicit EML representations of the required algebraic operations and elementary functions. This yields quantitative approximation results together with constructive realizations by EML trees. We also offer some preliminary empirical testing on the expressivity of EML trees.

Several theoretical and practical questions remain open. On the theoretical side, it would be interesting to extend the approximation framework to broader classes of function spaces, such as Besov or Triebel--Lizorkin spaces. 
Another important direction is determining whether similar approximation guarantees can be proved for vanilla EML architectures \eqref{originalEML}. One possible route would be to develop a proof strategy based on polynomial approximants with bounded coefficients together with quantitative upper bounds on size of the EML blocks required to realize such constants.

The present constructions are primarily designed to be explicit, interpretable, and easy to verify analytically. As a consequence, the resulting trees are unlikely to be optimal in either depth or parameter count. It is plausible that substantially shorter or more efficient EML trees exist that realize the same functions. Related questions include whether the number of parameters per EML atom can be reduced below the six used in the present work, and whether alternative atom designs could trade depth for increased input parameters in a manner analogous to wider neural architectures.

Finally, more effort should be put into investigating the symbolic and interpretable nature of EML trees, especially with regards to possible applications to scientific machine learning \cite{brunton2016discovering, rudy2017data, udrescu2020ai, wyder2026common, yermakov2025seismic} and in relation to algorithms like symbolic regression \cite{cranmer2020discovering}. 
We hope that our results will spur the improvements on the theoretical approximation theory of such trees and on the design of innovative techniques to train EML-type trees more effectively.

\clearpage
\bibliographystyle{plain}
\small{\bibliography{bibliography}}

\newpage
\appendix
\section{Notation}
\label{appendix:Notation}
We briefly recall the multi-index notation. Given a $d$-tuple of non-negative integers $\alpha \in \N_0^d$ with $d \in \N$, we have $|\alpha| = \alpha_1 + \cdots + \alpha_d$. Additionally, if $\alpha, \beta \in \N_0^d$ are both multi-indices, then $\alpha \leq \beta$ if and only if $\alpha_i \leq \beta_i$ for all $1 \leq i \leq d$. And, given a function $f : \Omega \to \R$ where $\Omega \subset \R^d$, we denote
\begin{equation}
    D^\alpha f = \fr{\partial^{|\alpha|} f}{\partial x_1^{\alpha_1} \cdots \partial x_d^{\alpha_d}}
\end{equation}
to be the classical or weak derivative of $f$.

Moreover, we use $\bigtimes$ to denote the Cartesian product of the provided sets.

\section{Sobolev Spaces}
\label{appendix:sobolevSpaces}
We recall the definitions of some basic functional spaces. 

The Lebesgue spaces, denoted by $L^p (\Omega)$, are defined as 
\begin{equation}
    L^p (\Omega) = \left\{ f: \Omega \to \R \ \middle| \ \OInt{|f(x)|^p} < \infty \right\},
\end{equation}
for $1 \leq p < \infty$, equipped with the norm
\begin{equation}
    \LnormD{f}{p}{\Omega} = \left( \OInt{|f(x)|^p }\right)^\fr{1}{p}.
\end{equation}
When $p = \infty$, the space $L^\infty (\Omega)$ consists of functions that are essentially bounded on $\Omega$ and whose norm is given by 
\begin{equation}
    \LnormD{f}{\infty}{\Omega} = \operatorname*{ess\,sup}_{x \in \Omega} \, |f(x)| = \inf \left\{ C \geq 0 \ \mid \ |f(x)| \leq C \text{ for almost every } x \in \Omega \right\}.
\end{equation}
Moreover, the Sobolev space $W^{k,p}$ is defined as 
\begin{equation}
    W^{k,p} (\Omega) = \left\{ f \in L^p (\Omega) \ \mid \  D^\alpha f \in L^p (\Omega) \text{ for all } \alpha \in \N_0^d \text{ with } |\alpha| \leq k \right\}.
\end{equation}
We also define associated seminorms. For $1 \le p < \infty$, we have
\begin{equation}
    |f|_{W^{m,p} (\Omega)} = \left( \sum_{|\alpha| =m} \LnormD{D^\alpha f}{p}{\Omega}^p \right)^\fr{1}{p}, \qquad \text{for } m =0,\dots,k,
\end{equation}
and for $p=\infty$, we have
\begin{equation}
    |f|_{W^{m,\infty} (\Omega)} = \max_{|\alpha| = m} \LnormD{D^\alpha f}{\infty}{\Omega} , \qquad \text{for } m =0,\dots,k.
\end{equation}
Based on these seminorms, we can define the following norms for $1 \le p < \infty$,
\begin{equation}
    \WnormD{f}{k}{p}{\Omega} = \left( \sum_{m=0}^k |f|_{W^{m,p} (\Omega)}^p \right)^\fr{1}{p},
\end{equation}
and for $p = \infty$,
\begin{equation}
    \WnormD{f}{k}{\infty}{\Omega} = \max_{0 \leq m \leq k} |f|_{W^{m,\infty} (\Omega)}.
\end{equation}

In particular, when $p = 2$, the Sobolev spaces $H^k (\Omega) = W^{k, 2} (\Omega)$ (with $k \in \N_0$) are Hilbert spaces with the corresponding norms $\HnormD{\cdot}{k}{\Omega} = \lVert \cdot \rVert_{W^{k,2} (\Omega)}$ and seminorms $| \cdot |_{H^m} = | \cdot |_{W^{m,2}}$ for any integer $m$ such that $0 \leq m \leq k$.

Additionally, the space of $k$-times continuously differentiable functions $C^k (\Omega)$ is endowed with  the norm $\lVert f \rVert_{C^k (\Omega)} = \lVert f \rVert_{W^{k, \infty} (\Omega)}$.

We will make frequent use of the following Banach algebra property.
\begin{lem}
\label{banachAlgebraProperty}
Let $d \in \N$, $k \in \N_0$, and $\Omega \subset \R^d$. If $f,g \in W^{k, \infty} (\Omega)$, then
\begin{equation}
    \WnormD{fg}{k}{\infty}{\Omega} \leq 2^k \WnormD{f}{k}{\infty}{\Omega} \WnormD{g}{k}{\infty}{\Omega}.
\end{equation}
\end{lem}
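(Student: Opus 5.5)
The plan is to establish the estimate one seminorm at a time via the general Leibniz rule, and then to bound the resulting binomial sum crudely. Fix a multi-index $\alpha \in \N_0^d$ with $|\alpha| = m \leq k$. For $f,g \in W^{k,\infty}(\Omega)$, the product $fg$ lies in $W^{k,\infty}(\Omega)$ and its weak derivatives are given by the Leibniz formula
\begin{equation*}
    D^\alpha (fg) = \sum_{\beta \leq \alpha} \binom{\alpha}{\beta} D^\beta f \, D^{\alpha - \beta} g,
    \qquad \binom{\alpha}{\beta} = \prod_{i=1}^d \binom{\alpha_i}{\beta_i}.
\end{equation*}
For smooth functions this is classical. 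For weak derivatives I would justify it in the standard way: first on any open subset $U$ compactly contained in $\Omega$, mollify one factor, say $f_\varepsilon = f * \rho_\varepsilon$, then apply the product rule for the weak derivative of a smooth function times a $W^{k,\infty}$ function, and pass to the limit $\varepsilon \to 0$ in $L^1(U)$, hence in distributions. Since $U$ is arbitrary, the identity holds a.e.\ on $\Omega$. Alternatively, one can induct on $|\alpha|$ using the first-order product rule for $W^{1,\infty}$ functions, which holds because $W^{1,\infty}$ functions are locally Lipschitz. This justification is the only genuinely technical point, and I would treat it as standard, citing it rather than reproving it.

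Next, I take the essential supremum. Each factor satisfies $\LnormD{D^\beta f}{\infty}{\Omega} \leq |f|_{W^{|\beta|,\infty}(\Omega)} \leq \WnormD{f}{k}{\infty}{\Omega}$, and the analogous bound holds for $g$ since $|\alpha-\beta| \leq k$. Therefore
\begin{equation*}
    \LnormD{D^\alpha(fg)}{\infty}{\Omega} \leq \Bigl(\sum_{\beta \leq \alpha} \binom{\alpha}{\beta}\Bigr) \WnormD{f}{k}{\infty}{\Omega} \WnormD{g}{k}{\infty}{\Omega}.
\end{equation*}
The combinatorial sum factorizes:
\begin{equation*}
    \sum_{\beta \leq \alpha} \binom{\alpha}{\beta} = \prod_{i=1}^d 2^{\alpha_i} = 2^{|\alpha|} = 2^m \leq 2^k.
\end{equation*}

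Finally, maximizing over $|\alpha| = m$ gives $|fg|_{W^{m,\infty}(\Omega)} \leq 2^k \WnormD{f}{k}{\infty}{\Omega}\WnormD{g}{k}{\infty}{\Omega}$. Taking the maximum over $0 \leq m \leq k$, as in the definition of the $W^{k,\infty}$ norm, yields the claim. The case $k=0$ reduces to $\LnormD{fg}{\infty}{\Omega} \leq \LnormD{f}{\infty}{\Omega}\LnormD{g}{\infty}{\Omega}$, which is immediate. I expect no real obstacle beyond the rigorous justification of the weak Leibniz rule, and that step is textbook material.
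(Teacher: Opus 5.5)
The paper states this lemma without proof, treating it as a standard fact, and your argument is the standard one: the weak Leibniz rule, the bound $\|D^\beta f\|_{L^\infty(\Omega)} \le \|f\|_{W^{k,\infty}(\Omega)}$ for $|\beta| \le k$, and the identity $\sum_{\beta \le \alpha} \binom{\alpha}{\beta} = 2^{|\alpha|} \le 2^k$ together give the claimed constant for the max-type norm the paper uses. Your proof is correct as written, with $\Omega$ implicitly taken to be open, which is how the lemma has to be read anyway.
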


\section{Useful Definitions and Lemmas}
\label{appendix:definitionsAndLemma}
We recall the definition of a star-shaped set, which is a necessary geometric requirement for the approximation of functions by polynomials in Sobolev spaces (see Lemma \ref{BrambleHilbert}). 
\begin{defi}
\label{starShapedSet}
    A set $A \subset \R^d$ is star-shaped with respect to every point in a subset $B \subset A$ if
    \begin{equation}
        \forall y \in B, \forall x \in A, [y,x] \subset A,
    \end{equation}
    where $[y,x] = \{(1-t) y + tx \mid t \in [0,1] \}$ is the straight line in $\R^d$ connecting the point $y$ to the point $x$. 
\end{defi}
We recall the Bramble-Hilbert lemma for Sobolev spaces, as proved in \cite{de2021approximation}, with similar statements appearing in \cite{duran1983polynomial} and \cite{verfurth1999note}.
\begin{lem}
\label{BrambleHilbert}
    Let $\Omega \subset \R^d$ be an open and bounded set of diameter $0 < h < e^{-1/2} d^{-3/2}$ which is star-shaped with respect to every point in an open ball $B \subset \Omega$ with diameter $\rho h$. Then, for every $f \in W^{s, \infty} (\Omega)$, there exists a polynomial $\hat f$ of degree at most $s-1$ such that for any $k \in \N_0$ with $k < s$, it holds that
    \begin{equation}
        \WnormD{f - \hat f}{k}{\infty}{\Omega} \leq \fr{\pi^{1/4} \sqrt{s} (d \sqrt{de}h)^{s-k}}{(s-k-1)!} \, |f|_{W^{s, \infty}(\Omega)}.
    \end{equation}
\end{lem}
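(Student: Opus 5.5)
We would prove Lemma \ref{BrambleHilbert} via the \emph{averaged Taylor polynomial}, following the classical Sobolev/Dupont--Scott argument as refined in \cite{duran1983polynomial, verfurth1999note, de2021approximation}. Fix a smooth cutoff $\phi \in C_c^\infty(B)$ with $\int \phi = 1$ and $\|\phi\|_{L^\infty} \lesssim (\rho h)^{-d}$. Define
\[
\hat f(x) = Q^s f(x) = \int_B \sum_{|\alpha| \le s-1} \frac{D^\alpha f(y)}{\alpha!} (x-y)^\alpha \, \phi(y) \, dy .
\]
This is a polynomial of degree at most $s-1$ in $x$. It is well defined for $f \in W^{s,\infty}(\Omega)$, either by density of smooth functions or by integrating by parts onto $\phi$.

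The first key step is the commutation identity $D^\beta Q^s f = Q^{s-|\beta|} D^\beta f$ for $|\beta| \le s-1$. It follows by direct differentiation of the monomials $(x-y)^\alpha$ and a reindexing of the sum. Consequently, for every $|\beta| = m \le k$,
\[
D^\beta(f - \hat f) = D^\beta f - Q^{s-m}(D^\beta f),
\]
so every derivative error reduces to a zeroth-order error for the function $g = D^\beta f \in W^{s-m,\infty}(\Omega)$ with Taylor order $s-m$.

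The second step is the remainder representation. Since $\Omega$ is star-shaped with respect to every point of $B$, the segment $[y,x]$ lies in $\Omega$ for all $x \in \Omega$ and $y \in B$. We can therefore write the integral form of Taylor's remainder along $[y,x]$, average it against $\phi(y)$, and change variables $z = y + t(x-y)$. This gives
\[
g(x) - Q^{n} g(x) = \sum_{|\gamma| = n} \int_\Omega K_\gamma(x,z)\, D^\gamma g(z)\, dz,
\]
with a kernel satisfying $|K_\gamma(x,z)| \le \frac{n}{\gamma!}\, C_d (1+\rho^{-1})^d |x-z|^{n-d}$. Taking the $L^\infty$ norm of $D^\gamma g$ out of the integral and integrating $|x-z|^{n-d}$ over a ball of radius $h$ around $x$ yields a bound of order $h^{n}$ times $|g|_{W^{n,\infty}}$. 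Summing over $|\gamma| = n$ contributes a factor of roughly $d^n/n!$ via the multinomial identity $\sum_{|\gamma|=n} 1/\gamma! = d^n/n!$. Applying this with $n = s-m$ gives, for each $m$, a bound of the form $c_{s,m,d}\,(d h)^{s-m}\,|f|_{W^{s,\infty}}$.

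The main obstacle is the third step: obtaining exactly the stated constant $\pi^{1/4}\sqrt{s}\,(d\sqrt{de}\,h)^{s-k}/(s-k-1)!$, which notably carries no explicit $\rho$-dependence. We would first estimate the volume factor of the unit ball in $\R^d$ and the derivative bounds of $\phi$ using Stirling-type inequalities; this is where the factors $\sqrt{e}$, $\sqrt{d}$ and $\pi^{1/4}\sqrt{s}$ should emerge. We would track $\rho$ as in \cite{de2021approximation}, where the constant is tied to the geometry used later (cubes, for which $\rho$ is fixed). Finally, we take the maximum over $0 \le m \le k$. The smallness assumption $h < e^{-1/2} d^{-3/2}$ makes $d\sqrt{de}\,h < 1$, so the terms $(d\sqrt{de}\,h)^{s-m}/(s-m-1)!$ are largest at $m = k$. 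Hence the $W^{k,\infty}$ norm, being a maximum of seminorms, is controlled by the top-order term, which gives the claimed estimate.
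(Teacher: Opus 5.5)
The paper does not prove this lemma; it quotes it from \cite{de2021approximation}, so your argument has to stand on its own. The skeleton is sound. $Q^s f$ is a polynomial of degree at most $s-1$ that does not depend on $k$. The commutation $D^\beta Q^s f = Q^{s-|\beta|} D^\beta f$ is correct. Your closing observation is also right: $d\sqrt{de}\,h<1$ puts the maximum over $m\le k$ at $m=k$.

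\textbf{The gap (Steps 2--3).} The statement is uniform in $\rho$, but you bound the kernel pointwise by $C_d(1+\rho^{-1})^d|x-z|^{n-d}$ and then integrate. Since $\int\phi=1$ forces $\|\phi\|_{L^\infty}\ge 1/|B|$, any such pointwise bound scales like $(\rho h)^{-d}$ up to geometric factors. After integrating you are left with a factor that blows up as $\rho\to 0$ (roughly like $\rho^{1-d}$). Stirling estimates cannot remove it, and ``tracking $\rho$'' only proves a weaker, $\rho$-dependent statement. Specialising to the cubes $J_j^N$ does not rescue the stated constant either. There $\rho=d^{-1/2}$, so the factor is super-exponential in $d$ and is not dominated by $\pi^{1/4}\sqrt{s}\,(d\sqrt{de})^{s-k}$ when $s-k$ is small compared with $d$. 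Note also that you never need to reproduce the constant exactly; any smaller bound suffices.

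\textbf{The missing idea.} For $p=\infty$ the singular-kernel estimate is unnecessary, and it is exactly where $\rho$ enters. Choose $\phi\ge 0$. For $g=D^\beta f$ and $n=s-m$, write $g(x)-Q^n g(x)=\int_B \phi(y)\bigl(g(x)-T^n_y g(x)\bigr)\,dy$, where $T^n_y g$ is the Taylor polynomial of degree $n-1$ at $y$. The segment $[y,x]$ lies in $\Omega$, and the derivatives of $g$ of order $n-1$ are Lipschitz on a convex neighbourhood of it (mollify there). Taylor's formula and the multinomial theorem then give $|g(x)-T^n_y g(x)|\le \frac{\|x-y\|_1^n}{n!}\,|g|_{W^{n,\infty}(\Omega)} \le \frac{(\sqrt{d}\,h)^n}{n!}\,|f|_{W^{s,\infty}(\Omega)}$. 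Integrating against $\phi\ge 0$ preserves this bound.

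Equivalently, in your kernel form: with $z=y+t(x-y)$, $x-y$ is a positive multiple of $x-z$, so $K_\gamma$ has no cancellation. Hence $\int_\Omega |K_\gamma(x,z)|\,dz=\frac{1}{\gamma!}\int_B\phi(y)\,|(x-y)^\gamma|\,dy$, which contains no $\rho$.

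Either way you get $|f-\hat f|_{W^{m,\infty}(\Omega)}\le \frac{(\sqrt{d}\,h)^{s-m}}{(s-m)!}\,|f|_{W^{s,\infty}(\Omega)}$. This is already below $\pi^{1/4}\sqrt{s}\,(d\sqrt{de}\,h)^{s-m}/(s-m-1)!$, and your monotonicity argument in $m$ finishes the proof, so the Stirling bookkeeping becomes superfluous.

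A minor point: smooth functions are not norm-dense in $W^{s,\infty}$. You do not need density, because $D^\alpha f\in L^\infty$ for $|\alpha|\le s-1$ makes $Q^s f$ well defined directly.
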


We recall the definition of the $\tanh$ function as
\begin{equation}
    \s(x) \coloneq \tanh (x) = \fr{e^{x} - e^{-x}}{e^x + e^{-x}}.
\end{equation}

We will frequently use the following property of $\tanh$ functions under differentiation.
\begin{lem}
\label{tanhProperty}
Given $m \in \N$, it holds that
\begin{equation}
    |\s^{(m)} (x)| \leq (2m)^{m+1} \min \{\exp (-2x), \exp(2x) \},
\end{equation}
for all $x \in \R$.
\end{lem}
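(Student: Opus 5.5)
The approach is to express every derivative of $\s$ as a polynomial in $\s$ that carries the factor $1-\s^2 = \operatorname{sech}^2$, and then read off the exponential decay from that factor. First note that $\min\{\exp(-2x),\exp(2x)\} = e^{-2|x|}$, so it suffices to show $|\s^{(m)}(x)| \leq (2m)^{m+1} e^{-2|x|}$ for all $x\in\R$.

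From $\s' = 1-\s^2$ and induction on $m$, we get $\s^{(m)} = (1-\s^2)\,Q_m(\s)$, where $Q_m$ is a polynomial of degree $m-1$. Indeed $Q_1 \equiv 1$. If $\s^{(m)} = (1-\s^2)Q_m(\s)$, the chain rule together with $\s'=1-\s^2$ gives
\begin{equation*}
\s^{(m+1)} = (1-\s^2)\bigl[(1-\s^2)Q_m'(\s) - 2\s Q_m(\s)\bigr],
\end{equation*}
so $Q_{m+1}(t) = (1-t^2)Q_m'(t) - 2tQ_m(t)$, which has degree $m$.

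Next, control $Q_m$ on $[-1,1]$, which contains the range of $\s$. Let $\|Q\|_1$ denote the sum of the absolute values of the coefficients of $Q$; then $\sup_{|t|\le 1}|Q(t)|\le\|Q\|_1$. Differentiating a polynomial of degree $m-1$ multiplies $\|\cdot\|_1$ by at most $m-1$, and multiplying by $(1-t^2)$ or by $2t$ multiplies it by at most $2$. Hence the recursion yields
\begin{equation*}
\|Q_{m+1}\|_1 \le 2(m-1)\|Q_m\|_1 + 2\|Q_m\|_1 = 2m\|Q_m\|_1,
\end{equation*}
and therefore $\|Q_m\|_1 \le 2^{m-1}(m-1)!$.

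Finally, the prefactor decays exponentially:
\begin{equation*}
1-\s(x)^2 = \operatorname{sech}^2 x = \fr{4}{(e^x+e^{-x})^2} \le 4e^{-2|x|}.
\end{equation*}
Combining the two bounds gives
\begin{equation*}
|\s^{(m)}(x)| \le 4\cdot 2^{m-1}(m-1)!\, e^{-2|x|} = 2^{m+1}(m-1)!\, e^{-2|x|} \le (2m)^{m+1} e^{-2|x|},
\end{equation*}
where the last step uses $(m-1)!\le m^{m+1}$.

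The only step needing care is the coefficient bound for $Q_m$. The structural factorisation itself is immediate from $\s'=1-\s^2$. I first considered expanding $\s$ in the series $1+2\sum_{n\ge1}(-1)^n e^{-2nx}$, but it behaves badly near $x=0$, which is why the polynomial-in-$\s$ route is the one to use. The stated constant $(2m)^{m+1}$ is generous, so no sharpness is required.
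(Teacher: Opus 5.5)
Your proof is correct, and it is the only complete argument on the table: the paper states this lemma without proof. It is part of the $\tanh$ machinery carried over from de Ryck et al.; the same bound produces the factor $(2k)^{k+1}$ in the choice of $\alpha$ in \eqref{valueOfAlpha}, which the paper justifies only by citation.

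Each of your steps holds. From $\sigma'=1-\sigma^2$ you get $Q_{m+1}(t)=(1-t^2)Q_m'(t)-2tQ_m(t)$ with $\deg Q_m\le m-1$. Because the coefficient $\ell^1$ norm is submultiplicative, multiplying by $1-t^2$ or by $2t$ costs at most a factor $2$, and differentiating costs at most a factor $m-1$. Finally, $\operatorname{sech}^2 x\le 4e^{-2|x|}$ together with $(m-1)!\le m^{m+1}$ closes the bound; at $m=1$ both constants equal $4$.

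Compared with starting from an explicit closed form for $\tanh^{(m)}$ (Stirling- or Eulerian-number expansions bounded term by term), your derivative-polynomial recursion has three advantages. It needs no combinatorial identities. It gets the decay as $x\to+\infty$ and $x\to-\infty$ at once, from the even factor $1-\sigma^2$. And it yields the sharper constant $2^{m+1}(m-1)!$, smaller than $(2m)^{m+1}$ by the factor $m^{m+1}/(m-1)!$. The stated constant is therefore generous, and using yours would only slightly shrink the logarithmic term in $\alpha$.
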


\section{Required Building Blocks}
\label{appendix:basicBuildingBlocks}
In this section, we present the explicit constructions of the basic binary operations, exponentiation, and the hyperbolic tangent function, all of which are used in the approximation proof (Theorem \ref{main_theorem}).

\subsection{Basic Binary Operations}
In Figure \ref{fig:basicBinary}, we provide the EML trees (with their associated parameters) to produce the basic binary operations of addition, subtraction, mutiplication, and division. The addition and subtraction blocks are used in the constructions of the multiplication and division blocks respectively. In all EML diagrams, we denote EML atoms by rectangles labeled $\text{EML}_{\theta _{i}}$ (where $i$ is an integer) and blocks by squares indicating the corresponding operation or function. In the first few examples, we explicitly write (one possible) value of the $\theta_i$'s at the bottom of the figures. Additionally, the functions written above the arrows between EMLs represent the output of the EML from which the arrow originates.
\begin{figure}[htbp]
  \centering
  \begin{subfigure}{0.48\textwidth}
    \centering
    \includegraphics[width=\textwidth]{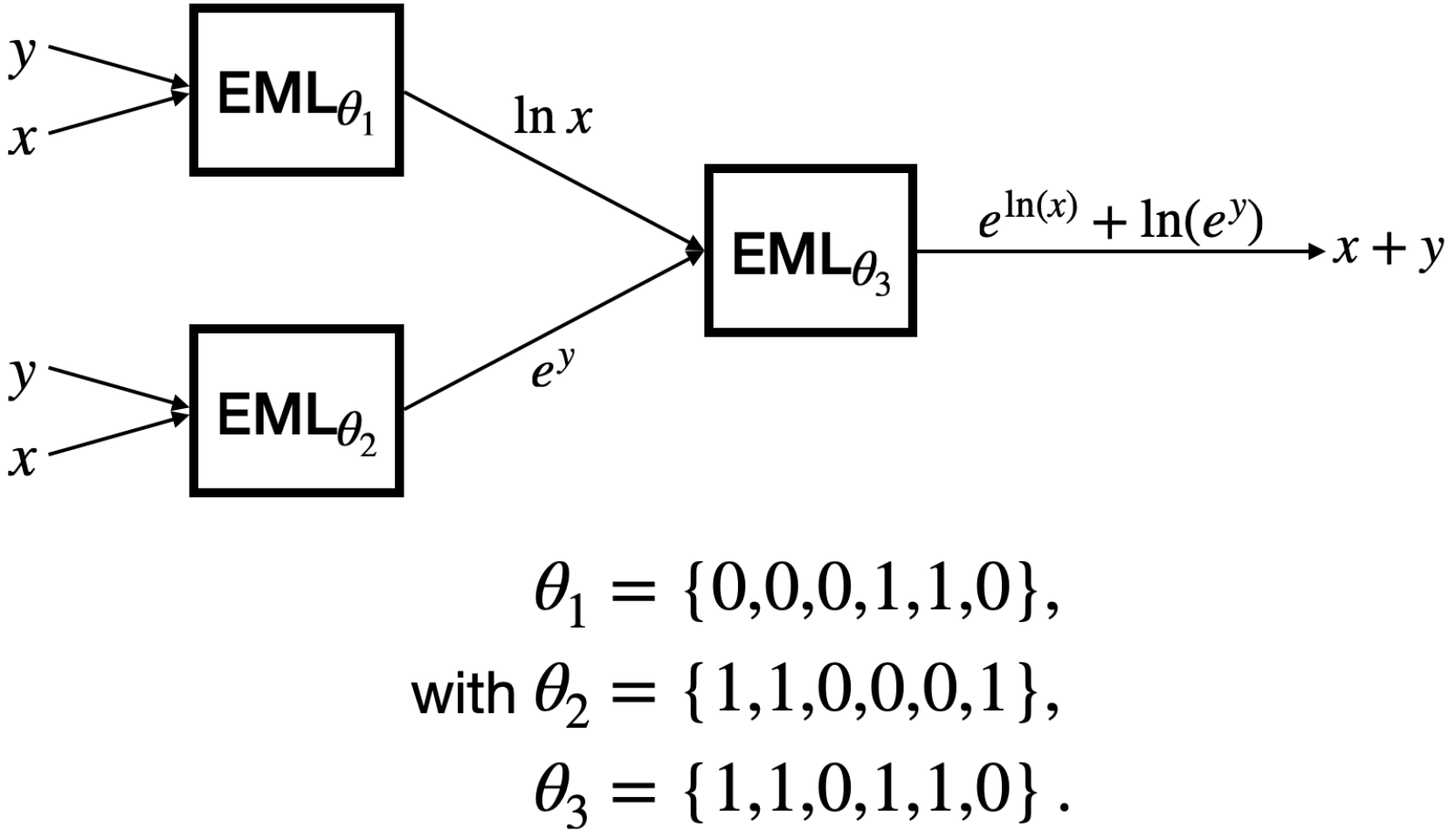}
    \caption{Addition (3)}
    \label{fig:addition}
  \end{subfigure}
  \hfill
  \begin{subfigure}{0.48\textwidth}
    \centering
    \includegraphics[width=\textwidth]{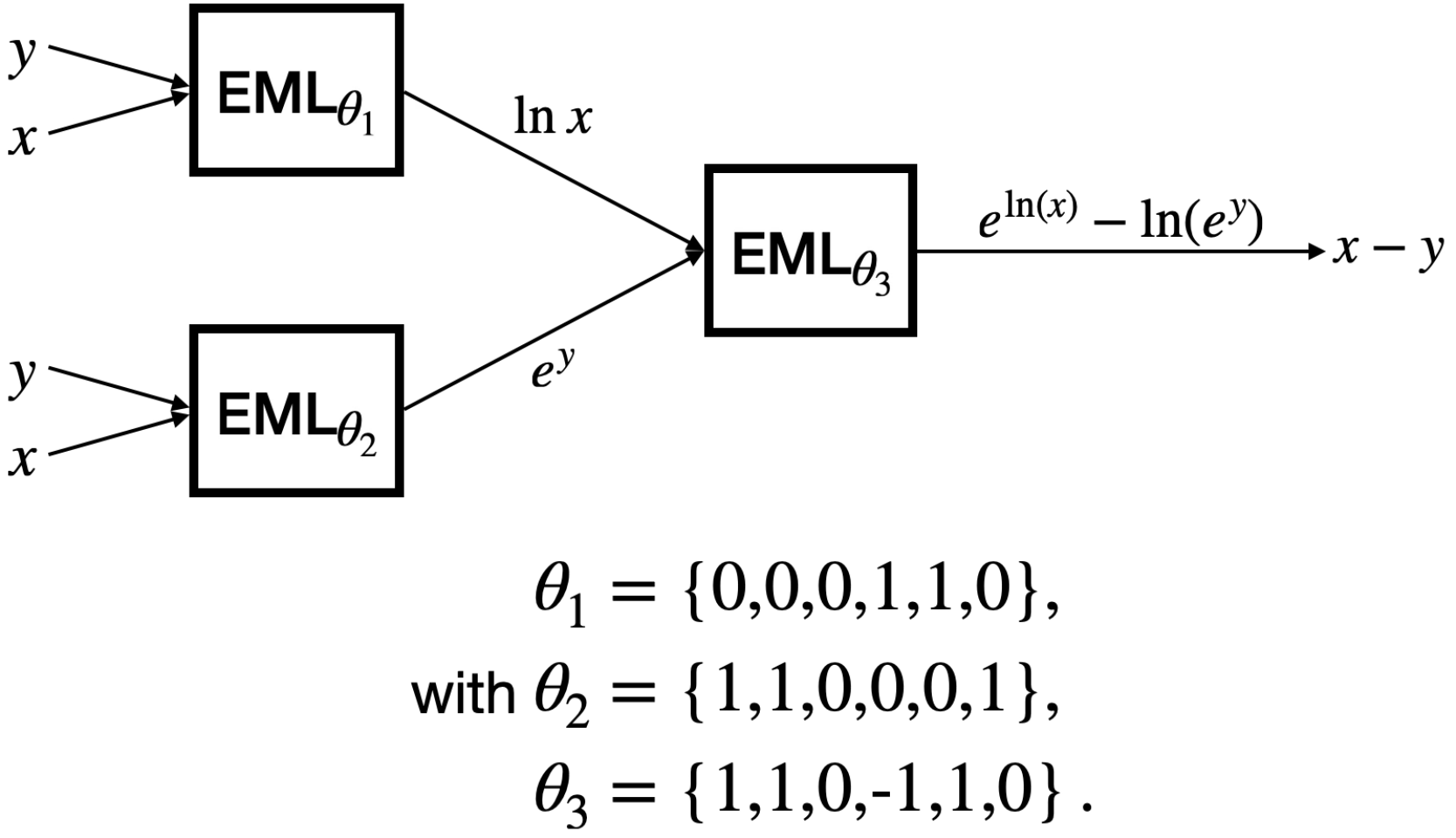}
    \caption{Subtraction (3)}
    \label{fig:subtraction}
  \end{subfigure}
  \vspace{1em}

  \begin{subfigure}{0.65\textwidth}
    \centering
    \includegraphics[width=\textwidth]{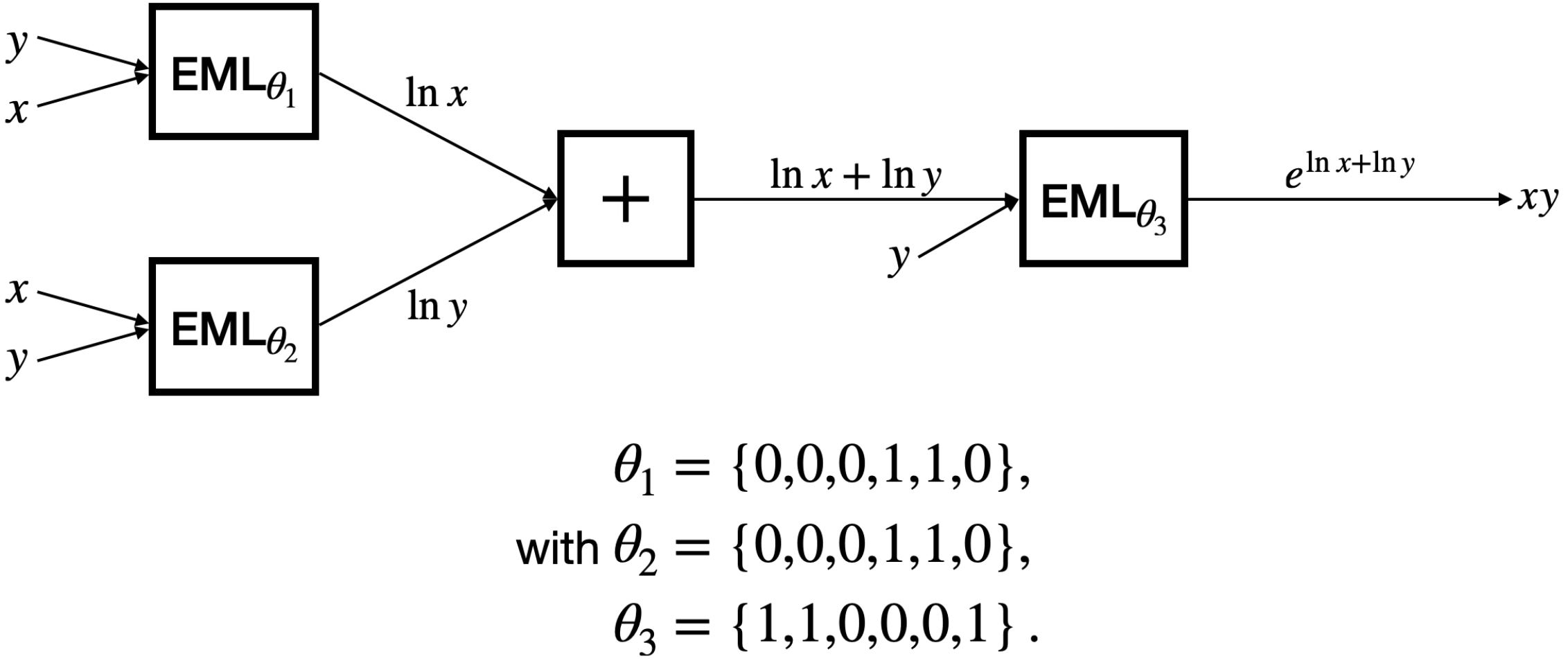}
    \caption{Multiplication (6)}
    \label{fig:multiplication}
  \end{subfigure}

  \vspace{1em}

  \begin{subfigure}{0.65\textwidth}
    \centering
    \includegraphics[width=\textwidth]{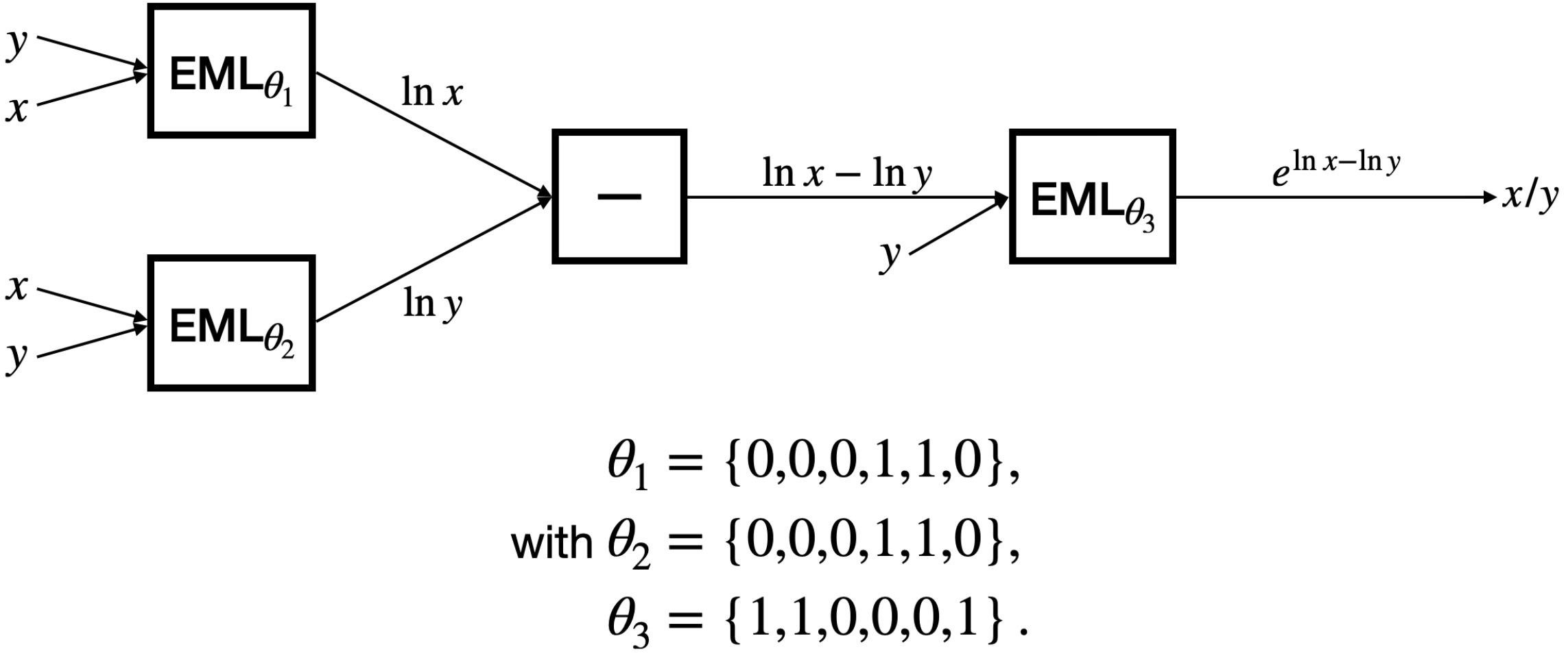}
    \caption{Division (6)}
    \label{fig:division}
  \end{subfigure}
  
  \caption{The Basic Binary Operations}
  \label{fig:basicBinary}
\end{figure}

As observed in Figure \ref{fig:basicBinary}, we note that the size and depth requirements of these binary operations are summarized in Table \ref{tab:binary_ops}. The inputs $x$ and $y$ are strictly positive.

\begin{table}[h]
\centering
\begin{tabular}{c|cc}
Operator & $N(\cdot)$ & $\mathrm{Depth}(\cdot)$ \\
\hline
$x +_\theta y$ & 3 & 2 \\
$x -_\theta y$ & 3 & 2 \\
$x \times_\theta y$ & 6 & 4 \\
$x \div_\theta y$ & 6 & 4 \\
$x^n_\theta, \ m x^n_\theta$ & 2 & 2 \\
$\tanh_\theta (x), \ m\tanh_\theta(x)$ & 8 & 5
\end{tabular}
\caption{Size and depth of the operations in Appendix \ref{appendix:basicBuildingBlocks}.}
\label{tab:binary_ops}
\end{table}

\subsection{Exponentiation}
In our proof, we will also make frequent use of exponentiation to a constant power. We provide explicit constructions for EML trees that perform $x^n$ and $mx^n$ for constants $m,n \in \R$ in Figure \ref{fig:exponentiating}.
\begin{figure}[htbp]
  \centering
  \begin{subfigure}{0.6\textwidth}
    \centering
    \includegraphics[width=\textwidth]{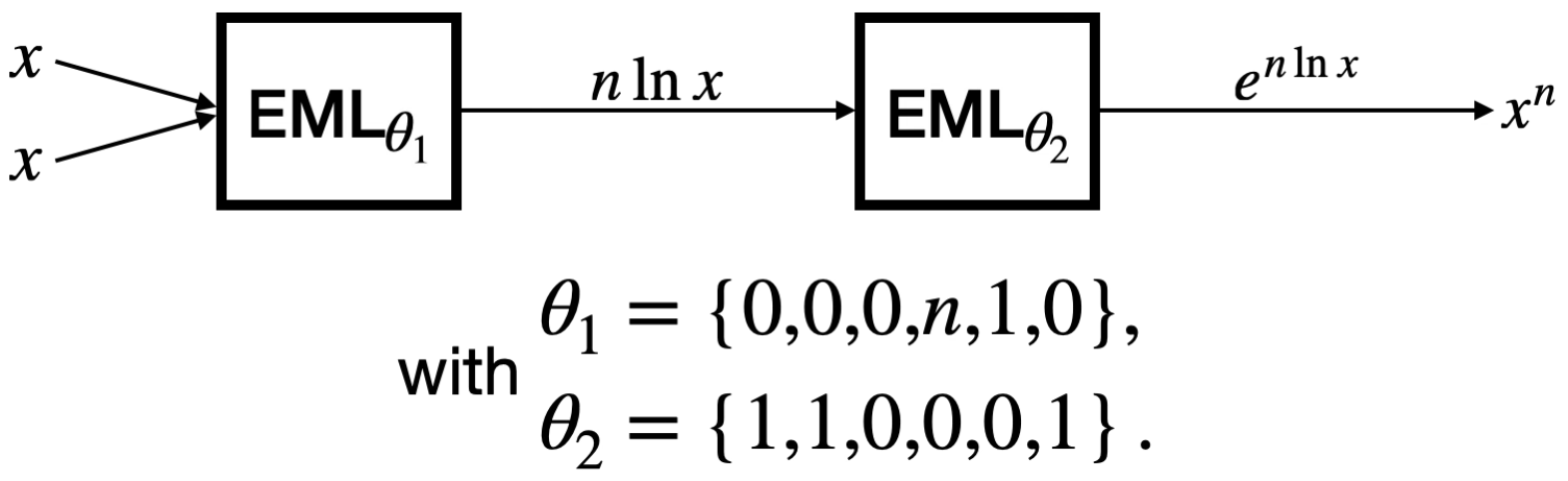}
    \caption{Exponentiation to a constant $n$ (2)}
    \label{fig:exponentiationSec}
  \end{subfigure}
  \vspace{1em}
  \begin{subfigure}{0.6\textwidth}
    \centering
    \includegraphics[width=\textwidth]{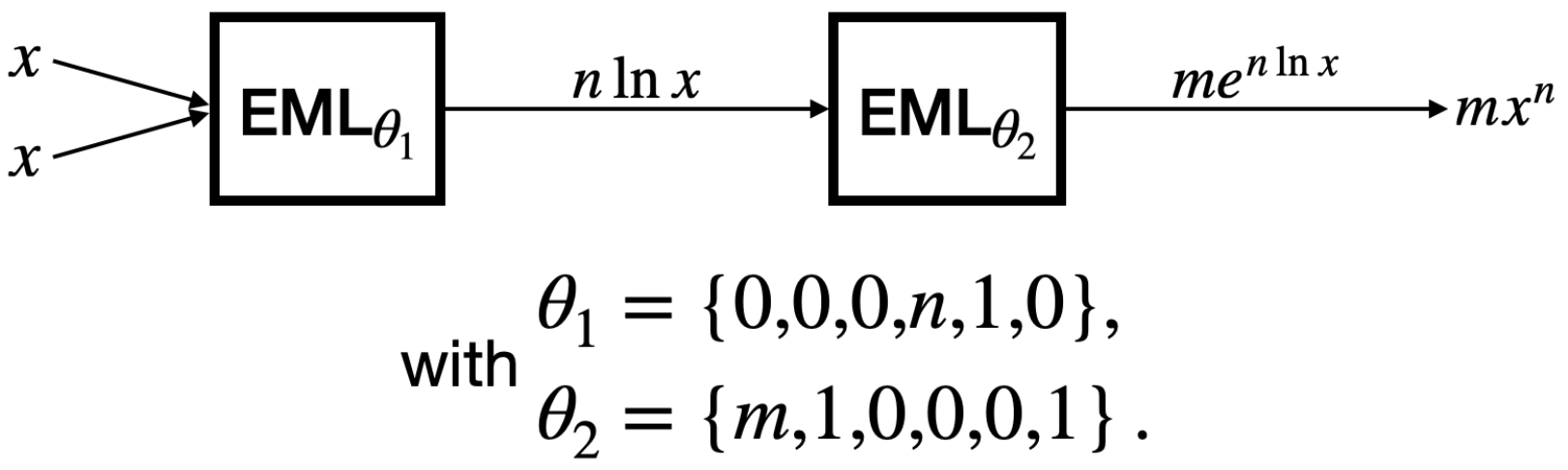}
    \caption{Exponentiation to a constant $n$ with prefactor $m$ (2)}
    \label{fig:exponentiationWithPrefactor}
  \end{subfigure}
  
  \caption{Exponentiating}
  \label{fig:exponentiating}
\end{figure}

Note that in Figure \ref{fig:exponentiationWithPrefactor}, we construct $m x^n$ directly, which we use to represent the monomial terms of polynomials (in Appendix \ref{appendix:constructingPolynomials}). This is because it is more efficient to incorporate the coefficient within the exponentiation block itself, rather than introducing a separate multiplication block to combine a constant with the $x^n$ block.

For both exponentiation blocks (\ref{fig:exponentiationSec} and \ref{fig:exponentiationWithPrefactor}), the size and depth of the associated EML trees are $2$ (and appended to Table \ref{tab:binary_ops}).

\subsection{Hyperbolic Tangent}
The hyperbolic tangent is constructed, using the exponential definition, by first getting $e^{2x} - 1$ and $e^{2x} + 1$ (via two separate EML blocks) and then diving them, as in Figure \ref{fig:tanhFigure}. The size of the $\tanh$ block is 8 and its depth is $5$, which are appended to Table \ref{tab:binary_ops}. 

We can similarly obtain $m \tanh x$, which is useful for the construction of the approximate partitions of unity, by amending the division block to include the prefactor $m$. This is done by changing $\theta_3$ in the division block (Figure \ref{fig:division}) to be $\theta_3 = \{m, 1, 0, 0, 0 , 1\}$.
\begin{figure}[htbp]
    \centering
    \includegraphics[width=0.6\linewidth]{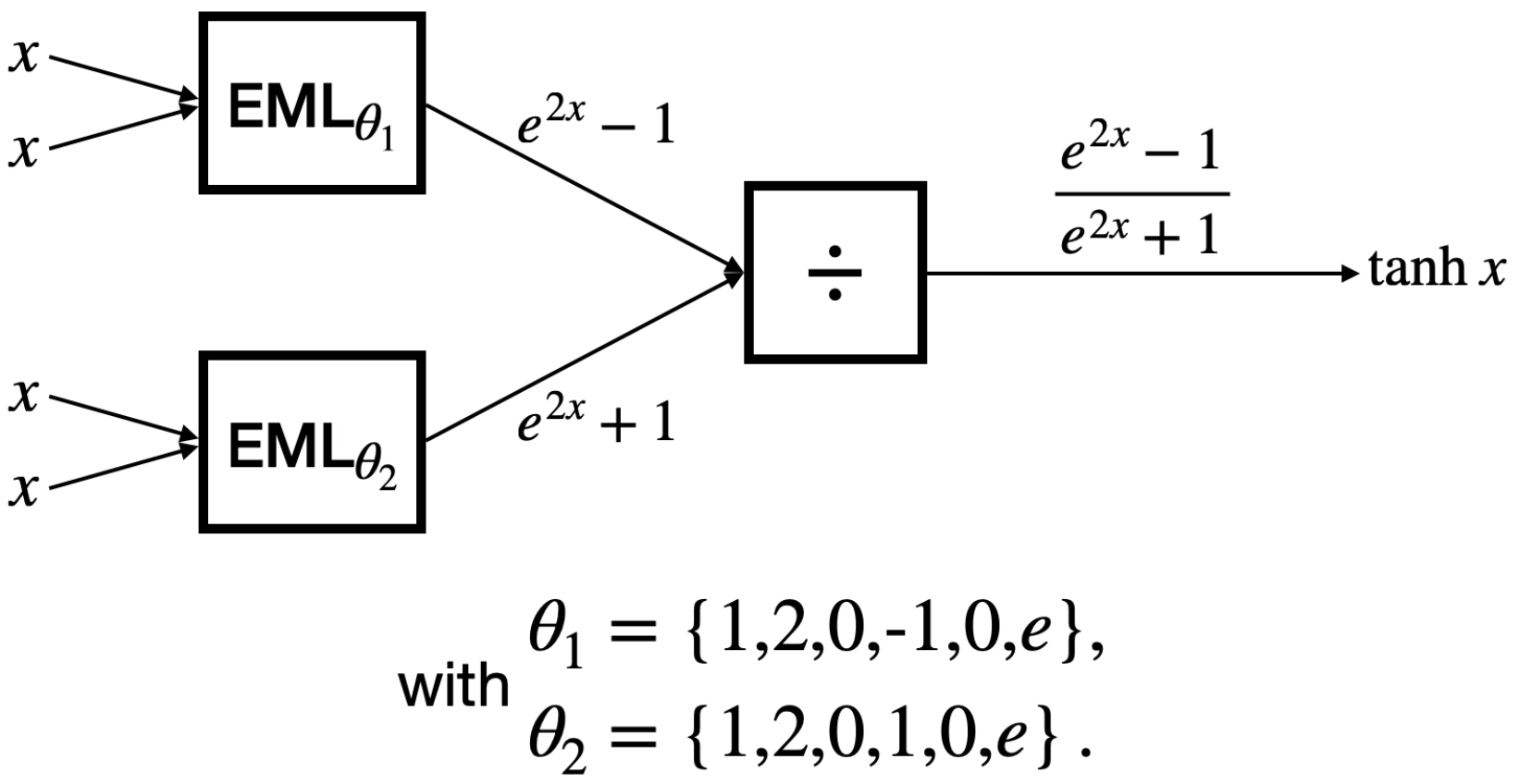}
    \caption{Hyperbolic Tangent (8)}
    \label{fig:tanhFigure}
\end{figure}

\section{Constructing Polynomials}
\label{appendix:constructingPolynomials}
The following two subsections detail how to construct univariate and multivariate polynomials using EML trees and provide upper bounds for the number of required EML blocks in each case.

\subsection{Univariate Polynomials}
\label{sec_univariate_polynomial}
A univariate polynomial of degree $k$, denoted by
\begin{equation}
    p(x) = \sum_{i=0}^k a_i x^i = a_0 + a_1 x + a_2 x^2 + \cdots + a_k x^k,
\end{equation}
with $x > 0$ and with positive coefficients $a_0, a_1, \dots, a_k > 0$,
can be (naively) constructed using EML blocks as in Figure \ref{fig:univariatePolynomial}, wherein we used two simple constituent blocks: addition (Figure \ref{fig:addition}) and exponentiation to a constant with a prefactor (Figure \ref{fig:exponentiationWithPrefactor}).

We impose that $x > 0$ because our universal approximation theorem (Theorem \ref{main_theorem}) is on the the interval $(0,1]^d$, in order to circumvent the undefinedness of the natural logarithm for nonpositive inputs.

The need for the positivity condition on the coefficients is because the exponentiation blocks are fed into an addition block; if a coefficient is negative, then the output of the exponentiation block is too (since $x > 0$), and so it creates a problem if it is fed into the logarithm of the addition block. The case with negative coefficients will be treated slightly differently below.

\begin{figure}[htbp]
    \centering
    \includegraphics[width=0.9\linewidth]{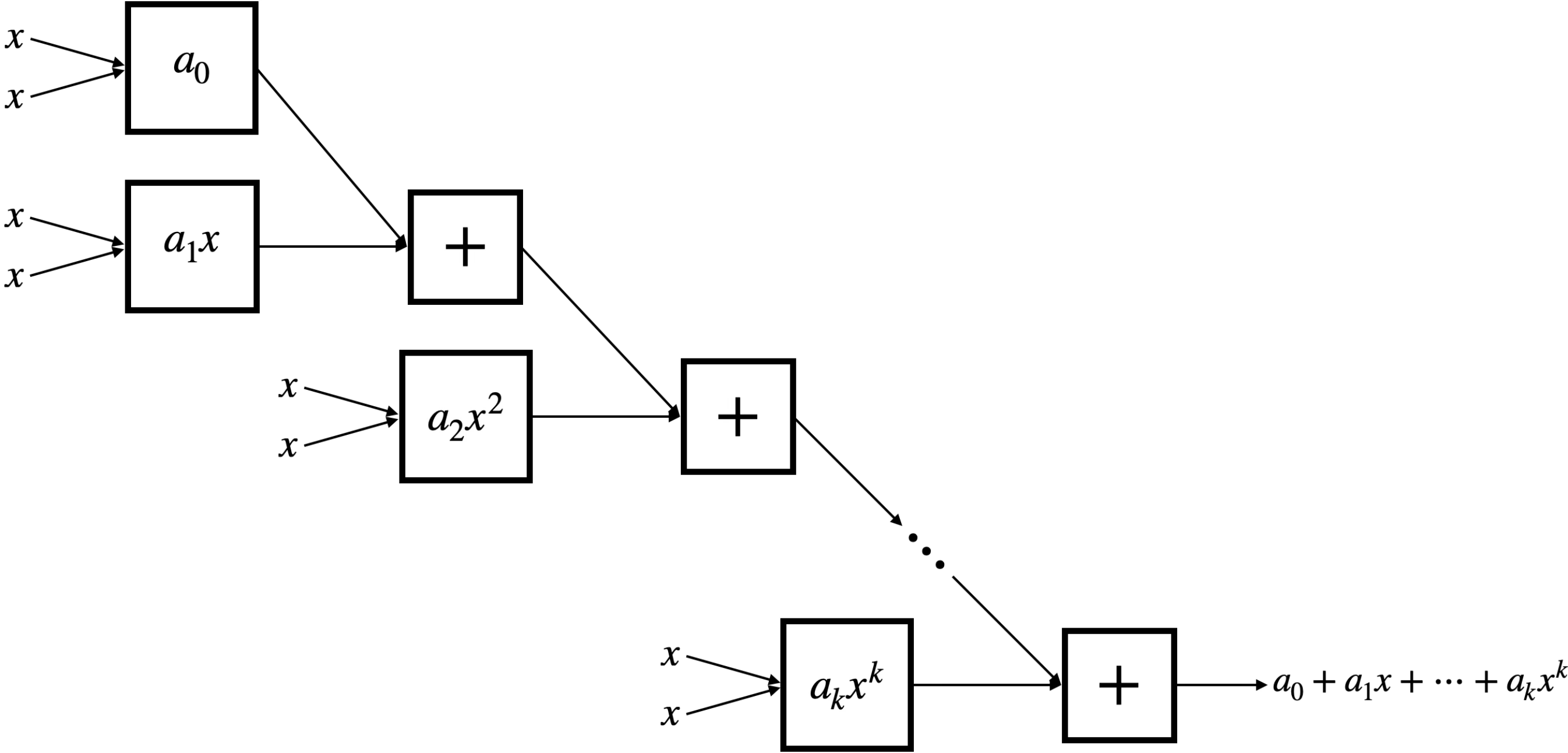}
    \caption{Univariate Polynomial ($5s-3$)}
    \label{fig:univariatePolynomial}
\end{figure}

The upper bound on the number of required EML blocks is obtained by noting that we used
\begin{itemize}
    \item $k$ addition blocks (each requiring $3$ EML atoms), and
    \item $k+1$ exponentiation with a prefactor blocks (each requiring $2$ EML atoms).
\end{itemize}

So, constructing an EML tree $p_\theta$ such that $p_\theta = p$ (as per the above procedure) requires $N(T_\theta) = 3k + 2(k+1) = 5k+2$ and $\text{Depth} (T_\theta) = 2k + 2$, since the longest path starts with the $x$ feeding into the $a_0$ block and passes through all {$k$} addition blocks.

Hence, we phrase the above conclusion as the lemma:
\begin{lem}[Construction of Univariate Polynomials with Positive Coefficients]
\label{lem_univariate_polynomial}
Given a univariate polynomial $p$ of degree $k$ {defined over $x > 0$ and with positive coefficients}, there exists an EML tree $p_\theta$ (following the explicit construction in Figure \ref{fig:univariatePolynomial}), such that $p_\theta = p$ with {$N(p_\theta) \leq 5k + 2$ and $\text{Depth}(p_\theta) \leq 2k+2$}.
\end{lem}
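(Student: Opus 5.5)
The proof is a direct construction, argued by induction on the degree $k$ and tracking both exactness and positivity of every intermediate value. Write $p(x)=\sum_{i=0}^k a_i x^i$ with $a_i>0$ and $x>0$. For each $i$, the monomial $a_i x^i$ is realized by the exponentiation-with-prefactor block of Figure \ref{fig:exponentiationWithPrefactor} with $m=a_i$, $n=i$. Concretely, take an inner atom with $a=0$, $d=1$, $e=1$, $f=0$, which outputs $\ln x$. Feed it to an outer atom with $a=1$, $b=n$, $c=\ln m$ and $d=0$, which outputs $e^{n\ln x+\ln m}=mx^n$. This needs $m>0$ (so that $c=\ln m$ exists) and $x>0$ (so that $\ln x$ is defined). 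Both hold by hypothesis, including the case $i=0$, where the block outputs the constant $a_0$ and still consumes $x$ as its leaf input. Each such block has size $2$ and depth $2$, and its output $a_ix^i$ is strictly positive.

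Next I chain the monomials with the addition block of Figure \ref{fig:addition}. It computes $u+v=e^{\ln u}+\ln(e^{v})$ using three atoms and depth $2$. The inner atoms produce $\ln u$ and $e^{v}$, and the outer atom is $e^{(\cdot)}+\ln(\cdot)$. This is valid whenever $u>0$. I define the partial sums $S_0=a_0$ and $S_j=S_{j-1}+a_jx^j$, where $S_{j-1}$ enters the argument of the inner atom that takes a logarithm. The key invariant is that each $S_j$ is a sum of positive terms, so $S_j>0$. Therefore every argument passed to a logarithm in the next addition block is strictly positive, and the tree is well defined on $x>0$ and equals $p$ there.

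Counting then gives the bounds. The tree uses $k+1$ monomial blocks of size $2$ and $k$ addition blocks of size $3$, so $N(p_\theta)=2(k+1)+3k=5k+2$. For depth, induction shows $\mathrm{Depth}(S_j)\le 2j+2$. The base case is $\mathrm{Depth}(S_0)=2$. At step $j$, the addition block adds $2$ on top of $\max\{\mathrm{Depth}(S_{j-1}),2\}$, and the new monomial branch, of depth $2$, is never the longer branch. This yields $\mathrm{Depth}(p_\theta)\le 2k+2$.

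The main point of care is not the counting but confirming that the parameter choices in the addition and exponentiation figures feed logarithms only positive quantities, and that the size accounting matches the figures' conventions. This is why the positivity of the coefficients is essential. With a negative $a_i$, the term $a_ix^i$ would fail to be positive, and in any case $\ln a_i$ would not exist.
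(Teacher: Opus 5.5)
Your proposal is correct and is essentially the paper's argument: the same chain of $k+1$ prefactored exponentiation blocks feeding $k$ addition blocks, with size $3k+2(k+1)=5k+2$ and depth $2k+2$ coming from the path that starts at the $a_0$ block and passes through all $k$ addition blocks. You go further than the paper by writing out the parameter choices and the positivity invariant on the partial sums, which the paper leaves to its figures. One minor detail: in the atoms where you set $d=0$, you should still pick $e,f$ (e.g.\ $e=0$, $f=1$) so that the logarithm's argument is positive.
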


{In the case where the polynomial contains negative coefficients, we perform the following decomposition (again with $x > 0$),
\begin{equation}
\label{polynomial_decomposition}
    p (x) = p_+ (x) - p_- (x) \coloneqq \sum_{\substack{0 \leq i \leq k \\ a_i > 0}} a_i x^i - \sum_{\substack{0 \leq i \leq k \\ a_i < 0}} | a_i |  x^i.
\end{equation}
Correspondingly, all the intermediary operations to construct $p_+$ and $p_-$ contain only positive numbers, that is they can be constructed via Lemma \ref{lem_univariate_polynomial}, and they output positive numbers, which can then be joined via the subtraction block to give $p$. Note that if the final aim of the EML tree is to represent $p$, then it doesn't cause any issue if $p(x)$ outputs a negative number.}

{Therefore, the above decomposition leads to the following lemma:
\begin{lem}[Construction of Univariate Polynomial]
\label{lem_univariate_polynomials_with_decomposition}
Given a univariate polynomial $p$ of degree $k$ (with both positive and negative coefficients) defined for $x > 0$, there exists an EML tree $p_\theta$, such that $p_\theta = p$ with $N(p_\theta) \leq 10k + 7$ and $\text{Depth} (p_\theta) \leq 2k + 4$.
\end{lem}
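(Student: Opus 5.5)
We split $p$ according to the sign decomposition \eqref{polynomial_decomposition}, writing $p = p_+ - p_-$, where $p_+$ collects the terms with $a_i > 0$ and $p_-$ collects the terms with $a_i < 0$, taken with coefficients $|a_i|$. Terms with $a_i = 0$ are simply dropped. Both $p_+$ and $p_-$ are univariate polynomials of degree at most $k$ with strictly positive coefficients, defined for $x>0$. Lemma \ref{lem_univariate_polynomial} therefore yields EML trees $p_{+,\theta}$ and $p_{-,\theta}$ that represent them exactly, each with size at most $5k+2$ and depth at most $2k+2$. The construction of Figure \ref{fig:univariatePolynomial} also shows that every intermediate quantity is positive. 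Indeed, $x>0$ and $m>0$ give $m x^n>0$, and the addition block maps positive inputs to positive outputs. Hence every value fed into a logarithm is strictly positive, and the outputs $p_\pm(x)$ themselves are strictly positive.

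The two trees are then fed into a single subtraction block (Figure \ref{fig:subtraction}), giving $p_\theta = p_{+,\theta} -_\theta p_{-,\theta}$. The subtraction block requires strictly positive inputs, which we have just verified. Its own output $p(x)$ may be negative, but this is harmless because it is the root of the tree and is never passed into a further logarithm.

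The size bound follows by counting: $N(p_\theta) \le (5k+2)+(5k+2)+3 = 10k+7$. The subtraction block has depth $2$ and sits on top of two subtrees of depth at most $2k+2$, so $\text{Depth}(p_\theta) \le 2k+4$.

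Two degenerate cases need care, and these are the only real obstacle. If all nonzero coefficients share one sign, then one of $p_\pm$ is identically zero and cannot be fed into a logarithm. In that case we either use the single tree directly (negating via the prefactor, e.g.\ taking $a=-1$ in the final atom's exponential part), or we add and subtract a positive constant, replacing $p_\pm$ by $p_\pm + c$ with $c>0$, which keeps both sides positive at no extra degree. Either route stays within the stated bounds. A related subtlety is that Lemma \ref{lem_univariate_polynomial} is stated for degree exactly $k$ with all coefficients positive, while $p_\pm$ may omit some powers. Omitted monomials only reduce the number of blocks, so the counts above remain upper bounds. If a uniform template is preferred, missing powers can instead be included with an arbitrary positive coefficient added to both $p_+$ and $p_-$, which cancels in the difference.
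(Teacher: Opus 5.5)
Your proposal is correct and follows the paper's argument: split $p = p_+ - p_-$ via \eqref{polynomial_decomposition}, build each part with Lemma~\ref{lem_univariate_polynomial} (size $5k+2$, depth $2k+2$), and join them with one subtraction block, giving size $2(5k+2)+3 = 10k+7$ and depth $2k+4$. Your handling of zero coefficients and of the degenerate cases $p_+\equiv 0$ or $p_-\equiv 0$ addresses points the paper passes over silently, and your uniform-template fix (adding the same positive coefficient at every power to both $p_+$ and $p_-$) is the cleanest way to do it. One small correction: negating the root atom's output would require flipping the signs of both $a$ and $d$, not just $a$.
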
}

\subsection{Multivariate Polynomials}
We recall that, for $\mathbf{x} = (x_1, \dots, x_d) \in \R^d$ {with $x_i > 0$ and $1 \leq i \leq d$}, a multivariate polynomial $p$ of total degree at most $k$ is of the form
\begin{equation}
\label{multivariatePolynomial}
    p(\mathbf{x}) = \sum_{|\alpha| \leq k} c_\alpha x_1^{\alpha_1} x_2^{\alpha_2} \cdots x_d^{\alpha_d},
\end{equation}
where $\alpha = (\alpha_1, \alpha_2, \dots, \alpha_d)$ is a multi-index with $|\alpha| = \alpha_1 + \alpha_2 + \cdots + \alpha_d$, and $c_\alpha$ are the associated coefficients. Such a polynomial (of degree $\leq k$) has $\binom{d+k}{k}$ monomial terms.

{As in Section \ref{sec_univariate_polynomial}, we first aim to construct an EML tree for the polynomial \eqref{multivariatePolynomial} with positive coefficients $c_\alpha$. To do that,} we (1) construct the monomials $c_\alpha x_1^{\alpha_1}$, $x_2^{\alpha_2}$, $\dots$, $x_d^{\alpha_d}$, (2) multiply these monomials together (separately for each $\alpha$), (3) do steps (1) and (2) for every monomial for each $\alpha$ with $|\alpha| \leq k$, and (4) add them together. A breakdown of the required EML tree sizes and depths for each step is as follows:
\begin{enumerate}[label=(\arabic*)]
    \item For a certain $\alpha$, each monomial, of which there are $d$, requires an exponentiation block (and the first has a prefactor), totaling $2d$ EML atoms;
    \item Again for each $\alpha$, we need to multiply the $d$ monomials, requiring $d-1$ multiplication blocks and, thus, $6(d-1)$ EML atoms; thus, each term, considering both getting monomials and multiplying them requires $2d + 6(d-1) = 8d - 6$ EML atoms; 
    \item Since we have $\binom{d+k}{k}$ terms in the polynomial, we do steps (1) and (2) $\binom{d+k}{k}$ times, thus requiring $(8d-6)\binom{d+k}{k}$ EML atoms;
    \item We finally sum over all terms, requiring $\binom{d+k}{k} - 1$ addition blocks, each consisting of $3$ EML atoms.
\end{enumerate}
Thus, in total, constructing the polynomial $p$ in \eqref{multivariatePolynomial} takes
\begin{equation}
    (8d-6) \binom{d+k}{k} + 3 \left( \binom{d+k}{k} - 1 \right) = (8d-3)\binom{d+k}{k} - 3
\end{equation}
EML atoms, and whose depth can be calculated (by drawing a schematic like Figure \ref{fig:univariatePolynomial}) to be
\begin{equation}
    2+4(d-1) + 2 \left( \binom{d+k}{k} - 1 \right) = 4d + 2 \binom{d+k}{k} -4,
\end{equation}
since the $x$'s feed into the first $c_\alpha x^{\alpha_1}$ block (of depth $2$), pass through $d-1$ multiplication (each of depth $4$), and then through $\binom{d+k}{k} - 1$ addition blocks (each of depth $2$).

The above justifies the following lemma:
\begin{lem}[Construction of Multivariate Polynomials with Positive Coefficients]
\label{lemma_multivariatePolynomial}
Given a $d$-dimensional multivariate polynomial $p$ defined for positive input, with degree $k$, and with positive coefficients, there exists an EML tree $p_\theta$, such that $p_\theta = p$ with
\begin{equation}
    N(p_\theta) \leq (8d-3) \binom{d+k}{k} - 3, \quad \text{ and } \quad \text{Depth}(p_\theta) \leq 4d + 2 \binom{d+k}{k} - 4.
\end{equation}
\end{lem}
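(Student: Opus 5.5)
The proof is by explicit construction, following the four-step recipe described just before the statement. I will realize each monomial term as a sub-tree, combine the terms with addition blocks, and tally the atoms and the longest root-to-leaf path. Positivity of every intermediate value is the invariant I must maintain so that each logarithm in a later atom receives a strictly positive argument.

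\textbf{Step 1: a single monomial.} Fix a multi-index $\alpha$ with $|\alpha|\le k$ and coefficient $c_\alpha>0$. I realize $c_\alpha x_1^{\alpha_1}$ with the exponentiation-with-prefactor block (Figure \ref{fig:exponentiationWithPrefactor}, taking $m=c_\alpha$, $n=\alpha_1$). I realize each $x_i^{\alpha_i}$, $i\ge 2$, with the plain exponentiation block (Figure \ref{fig:exponentiationSec}). This uses $2d$ atoms. An exponent $\alpha_i=0$ simply gives the constant $1$, which is still positive, so it needs no special treatment.

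Since all $x_i>0$ and $c_\alpha>0$, every output is strictly positive. I then chain the $d$ factors with $d-1$ multiplication blocks (Figure \ref{fig:multiplication}), arranged sequentially. Each multiplication block, and its internal addition block, receives only positive inputs, so its logarithms are well defined and its output, a product of positives, is positive. One term therefore costs $2d+6(d-1)=8d-6$ atoms, with depth $2+4(d-1)$ along the sequential chain.

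\textbf{Step 2: summing the terms.} There are $M=\binom{d+k}{k}$ multi-indices, and all of them can be included even if some coefficients are absent. If some $c_\alpha$ is zero, I either drop that term, which only lowers the counts, or note that the statement is an upper bound anyway. I sum the $M$ positive term-trees with $M-1$ addition blocks (Figure \ref{fig:addition}) arranged as a chain. Every partial sum is positive, so the logarithm inside each subsequent addition block is valid.

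\textbf{Step 3: counting.} The total size is
\[
(8d-6)M+3(M-1)=(8d-3)M-3.
\]
For the depth, the longest path starts at an input, passes through the depth-$2$ exponentiation block, then through $d-1$ multiplications of depth $4$ each, then through $M-1$ additions of depth $2$ each. This gives
\[
2+4(d-1)+2(M-1)=4d+2M-4.
\]
Here I use that the depth of a composite is at most the depth of the outer block plus the maximum depth of its input sub-trees, as in Table \ref{tab:binary_ops}. The bound is therefore conservative even if some branches are shorter.

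\textbf{Main obstacle.} The only delicate point is making the depth accounting rigorous. The depth of a block composed with sub-trees is not simply additive unless the deepest input enters at the block's deepest leaf. I would therefore state a small composition rule: the depth of a composite is at most the depth of the outer block plus the maximum depth of its input sub-trees. I would then check that the chain orderings above realize the claimed bound.

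The positivity invariant is straightforward to maintain. It only requires, as in the blocks of Appendix \ref{appendix:basicBuildingBlocks}, that the parameters $e_i,f_i$ are nonnegative with positive inputs, so that every logarithm argument $e_i y+f_i$ stays positive.
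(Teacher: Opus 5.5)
Your construction and your size and depth counts are the same as the paper's proof: $8d-6$ atoms per term, $3(M-1)$ for the addition chain, hence $(8d-3)M-3$ atoms and depth $2+4(d-1)+2(M-1)$. Your composition rule for depth is a sound way to make the paper's longest-path count rigorous. The step that fails is the one you add beyond the paper: the claim that each multiplication block, ``and its internal addition block, receives only positive inputs.''

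The paper says the multiplication block is built from the addition block. The analogous division block ends in the atom $\theta_3$, which the paper turns into $\{m,1,0,0,0,1\}$ to insert a prefactor, so that atom is a pure exponential. As far as one can reconstruct it from the text (six atoms, depth four), the block is therefore $u\times_\theta v=\exp\bigl(\ln u+_\theta \ln v\bigr)$. The addition block in turn contains the atom $\theta_1=\{0,\cdot,\cdot,1,1,0\}$, which takes the logarithm of one summand. So the internal addition block receives $\ln u$ and $\ln v$, not $u$ and $v$, and one of these is fed to a logarithm.

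In your chain the fresh factors are $x_i^{\alpha_i}\in(0,1]$, and the running product $c_\alpha x_1^{\alpha_1}\cdots$ can also be at most $1$. Whichever factor is routed there therefore produces an argument $\le 0$, and the real logarithm is undefined. For $\alpha_i=0$ the argument is exactly $\ln 1=0$, where even the complex logarithm fails. Contrary to your remark, that case does need treatment; dropping such factors is harmless, since it only lowers the counts.

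The paper's proof has the same omission, because it relies only on the assertion accompanying Table~\ref{tab:binary_ops} that the blocks accept strictly positive inputs. The stated bounds are not in question, but well-definedness needs a modified block. For example, route the fresh factor $u=x_i^{\alpha_i}\in(0,1]$ to the first slot and the running product $v$ to the second, and compute $\ln u+\ln v=-\exp\bigl(\ln(1-\ln u)\bigr)+\ln(\mathrm{e}\,v)$ as follows:
\begin{itemize}
\item an atom with $b=0$ and parameters $\{1,0,0,-1,1,0\}$ produces $1-\ln u\ge 1$;
\item a logarithm atom $\{0,0,0,1,1,0\}$ produces $\ln(1-\ln u)$;
\item one atom with parameters $\{-1,1,0,1,\mathrm{e},0\}$ produces $\ln u+\ln v$;
\item the exponential atom $\{1,1,0,0,0,1\}$ then gives $uv$.
\end{itemize}
This uses four atoms and depth at most four, every logarithm argument is strictly positive, and your bounds $(8d-3)M-3$ and $4d+2M-4$ remain valid.
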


{For a multivariate polynomial with both positive and negative coefficients, we perform the same decomposition as in \ref{polynomial_decomposition} and can thus conclude the following lemma:
\begin{lem}[Construction of Multivariate Polynomials] 
\label{lemma_multivariatePolynomial_with_decomposition}
    Given a $d$-dimensional multivariate polynomial $p$ defined for positive input and with degree $k$, there exists an EML tree $p_\theta$, such that $p_\theta = p$ with
    \begin{equation}
        N(p_\theta) \leq 2(8d - 3) \binom{d+k}{k} - 3, \text{ and } \text{Depth}(p_\theta) \leq 4d + 2\binom{d+k}{k} - 2.
    \end{equation}
\end{lem}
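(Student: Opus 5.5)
The plan is to reduce to Lemma \ref{lemma_multivariatePolynomial} through the sign-based decomposition \eqref{polynomial_decomposition}, transplanted to the multivariate setting. Write
\begin{equation*}
    p(\mathbf{x}) = p_+(\mathbf{x}) - p_-(\mathbf{x}), \qquad p_+ = \sum_{|\alpha|\le k,\ c_\alpha>0} c_\alpha \mathbf{x}^\alpha, \qquad p_- = \sum_{|\alpha|\le k,\ c_\alpha<0} |c_\alpha|\, \mathbf{x}^\alpha .
\end{equation*}
Terms with $c_\alpha = 0$ are dropped. Each of $p_\pm$ is a $d$-variate polynomial of degree at most $k$ with strictly positive coefficients. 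Because $x_i>0$, every monomial block $c_\alpha x_1^{\alpha_1}$ and $x_i^{\alpha_i}$ outputs a positive number. Every product and partial sum of positive numbers is also positive. Hence all intermediate values entering logarithms inside the trees for $p_\pm$ stay positive, and Lemma \ref{lemma_multivariatePolynomial} applies to each of them.

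A degenerate case needs separate handling. If one of $p_\pm$ is empty, for instance when all coefficients are positive, I will either use Lemma \ref{lemma_multivariatePolynomial} directly, which is within the bound, or replace the empty part by an artificial positive constant $\varepsilon$ added to both $p_+$ and $p_-$. The second option keeps the construction uniform and does not change $p$. Adding a positive constant to a polynomial with positive coefficients keeps it in the class of Lemma \ref{lemma_multivariatePolynomial}; the count must then be checked against the $\binom{d+k}{k}$ terms allowed.

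Next, I combine the two trees with one subtraction block (Figure \ref{fig:subtraction}). That block takes the positive inputs $p_+(\mathbf{x})$ and $p_-(\mathbf{x})$ and outputs $p(\mathbf{x})$. Its output may be negative, which is harmless since it is the root of the tree.

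For the size, let $n_\pm$ be the number of monomials in $p_\pm$, so $n_+ + n_- \le \binom{d+k}{k}$. Redoing the count before Lemma \ref{lemma_multivariatePolynomial} with $n_\pm$ in place of $\binom{d+k}{k}$ gives $(8d-3)n_\pm - 3$ atoms for each part. Adding the $3$ atoms of the subtraction block gives $(8d-3)(n_++n_-) - 3 \le (8d-3)\binom{d+k}{k} - 3$. This is well below the claimed $2(8d-3)\binom{d+k}{k}-3$. Alternatively, applying the lemma's bound crudely to each part gives $2(8d-3)\binom{d+k}{k} - 6 + 3$, which is exactly the claim.

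For the depth, each part has depth at most $4d + 2\binom{d+k}{k} - 4$, and the subtraction block adds $2$. This gives $4d + 2\binom{d+k}{k} - 2$.

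The only real obstacle is the positivity bookkeeping. I must make sure that within the multiplication blocks of Figure \ref{fig:multiplication} and the addition blocks, every internal atom receives positive arguments in its logarithm, given that its inputs are positive. I will take this as given from the block constructions in Appendix \ref{appendix:basicBuildingBlocks}, which are stated for strictly positive inputs $x, y$.
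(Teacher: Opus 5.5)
Your proposal is correct and follows the paper's own route: decompose $p = p_+ - p_-$ by coefficient sign, apply Lemma \ref{lemma_multivariatePolynomial} to each part, and join the two parts with one subtraction block. This gives exactly $2\left[(8d-3)\binom{d+k}{k}-3\right]+3$ atoms, and the subtraction block adds $2$ to the depth. Your sharper count using $n_+ + n_- \le \binom{d+k}{k}$, and your handling of the case where one of $p_\pm$ is empty, are small refinements that the paper's one-line justification does not spell out.
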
}

\section{Approximating the partition of unity}
\label{appendix:partitionsOfUnity}
In this section, we aim to construct an approximate partition of unity, similar to Section 4 of \cite{de2021approximation}. We do this because indicator functions are unsuitable to be represented by EML trees due to the fact that they contain a discontinuity and cannot be suitably represented by chaining elementary functions constructed from compositions of EML. 
For this reason, we construct smooth approximations of indicator functions on subsets of $[0,1]^d$ via $\tanh$ functions. When suitably shifted and parametrized, these functions output values close to $1$ within the subset and close to $0$ outside of it.

\subsection{Properties of the approximate partition of unity}
% To that effect, for every $j \in \mathcal{A}^N$, 
We now recall some of the properties of the partition of unity of \cite{de2021approximation}.

For a point $y \in \R$ and a parameter $\alpha \in \R$ (to be chosen later), we define the one-dimensional approximate indicator functions as
\be\begin{aligned}
\label{partitions_numero_uno}
    \rho_1^N(y) &= \fr{1}{2} - \fr{1}{2} \s \left( \alpha \left( y - \fr{1}{N} \right) \right), \\
    \rho_m^N(y) &= \fr{1}{2} \s \left( \alpha \left( y - \fr{m-1}{N} \right) \right)  - \fr{1}{2} \s \left( \alpha \left(y - \fr{m}{N} \right) \right) \quad \text{for } 2 \leq m \leq N-1, \\
    \rho_N^N(y) &= \fr{1}{2} \s \left( \alpha \left( y - \fr{N-1}{N} \right) \right) + \fr{1}{2}.
\end{aligned}\ee
We can clearly see that if $\alpha$ is large enough, these act as soft indicator functions over the intervals $\left(\fr{m-1}{N}, \fr{m}{N} \right)$ (depending on the subscript of $\rho$). 

To choose $\alpha$, we note that since $\s (x) \to 1$ as $x \to \infty$ and its derivatives also saturate ($\s^{(m)}(x) \to 0$ with $x \to \infty$), we can choose $R > 0$ large enough such that $|\s^{(m)}|$ is decreasing over $[R,\infty)$ for all $1 \leq m \leq k$.

So, given $\varepsilon > 0$, we choose $\alpha$, depending on $N$ and $\varepsilon$, large enough such that
\begin{equation}
\label{defOfConstantR}
    \fr{\alpha}{N} \geq R, \qquad 1 - \s \left( \fr{\alpha}{N} \right) \leq \varepsilon, \qquad \alpha^m \left| \s^{(m)} \left( \fr{\alpha}{N} \right) \right| \leq \varepsilon \text{ for all } 1 \leq m \leq k,
\end{equation}
justified also by Lemma A.4 in \cite{de2021approximation}. Consequently, $\alpha$ can be chosen to satisfy the above conditions and has the explicit value
\begin{equation}
\label{valueOfAlpha}
    \alpha = N \max \left\{ R, \ln \left( \fr{(2k)^{k+1} (Nk)^k}{e^k \varepsilon} \right) \right\},
\end{equation}
(see Lemma A.5 in \cite{de2021approximation} for justification).

Importantly, for $j = (j_i)_{i=1}^d \in \mathcal{A}^N$ (where $\mathcal{A}^N$ is the index set \eqref{indexan}), we define
\begin{equation}
\label{partition_of_unity_def}
    \Phi_j^{N} (x) = \prod_{i=1}^d \rho_{j_i}^{N} (x_i),
\end{equation}
which acts as our approximate partition of unity. We also define the set $\mathcal{V}_d$ as
\begin{equation}
    \mathcal{V}_d = \left\{ v \in \mathbb{Z}^d \ \middle| \ \max_{1 \leq i \leq d} |v_i| \leq 1 \right\},
\end{equation}
which allows us to choose only the cubes directly surrounding a cube of interest.

Lemmas \ref{lemma1_partitionsOfUnity} and \ref{lemma2_partitionsOfUnity} below prove that the set of functions $\{ \Phi_j^N \}_{j \in \mathcal{A}^N}$  acts as an approximate partition of unity in the sense that they satisfy the following two properties: for every $j \in \mathcal{A}^N$,
\begin{equation}
    \sum_{v \in \mathcal{V}_d} \Phi_{j+v}^{N, d} \approx 1 \qquad \text{and} \qquad \sum_{\substack{v \notin \mathcal{V}_d, \\ j+v \in \mathcal{A}^N}} \Phi_{j+v}^{N,d} \approx 0.
\end{equation}

We recall the below two lemmas from \cite{de2021approximation}. The first provides a quantitative bound for how close to 1 is the contribution of $\Phi_j^N$ and its direct neighbors on $I_j^N$ (constructed in \eqref{definitionOfCubes}).

\begin{lem}[Lemma 4.1 of \cite{de2021approximation}]
\label{lemma1_partitionsOfUnity}
If $0 < \varepsilon < 1/4$, then
\begin{equation}
    \WnormD{\sum_{v \in \mathcal{V}_d} \Phi_{j+v}^{N} - 1}{k}{\infty}{I_j^N} \leq 2^{dk} d \varepsilon.
\end{equation}
\end{lem}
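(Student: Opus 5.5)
The plan is to reduce to one dimension: the sum over $\mathcal{V}_d$ factorizes across coordinates, and in each coordinate the three neighbouring $\rho$'s telescope. Throughout I adopt the convention, implicit in the statement, that terms with $j+v\notin\mathcal{A}^N$ are omitted from the sum.

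\textbf{Step 1 (factorization).} Since $\Phi_{j+v}^N(x)=\prod_{i=1}^d \rho^N_{j_i+v_i}(x_i)$ and $\mathcal{V}_d=\{-1,0,1\}^d$ is a product set (intersected with the product set $\mathcal{A}^N$), we get
\[
\sum_{v\in\mathcal{V}_d}\Phi^N_{j+v}(x)=\prod_{i=1}^d s_{j_i}(x_i),\qquad s_m(y)\coloneq\sum_{\substack{|u|\le 1\\ 1\le m+u\le N}}\rho^N_{m+u}(y).
\]

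\textbf{Step 2 (one-dimensional estimate).} From \eqref{partitions_numero_uno}, the sum telescopes. For interior $m$,
\[
s_m(y)=\tfrac12\s\bigl(\alpha(y-\tfrac{m-2}{N})\bigr)-\tfrac12\s\bigl(\alpha(y-\tfrac{m+1}{N})\bigr).
\]
When $m$ is $1$ or $N$ (or both are near), the corresponding $\s$ term is replaced by the constant $\pm\tfrac12$, so that term contributes exactly $0$ to the error. Hence
\[
s_m(y)-1=-\tfrac12\bigl[1-\s(\alpha(y-\tfrac{m-2}{N}))\bigr]-\tfrac12\bigl[1-\s(\alpha(\tfrac{m+1}{N}-y))\bigr],
\]
using the oddness of $\s$.

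For $y\in((m-1)/N,m/N)$, both arguments $t$ satisfy $t\ge \alpha/N\ge R$. Two facts then finish the estimate:
\begin{itemize}
\item By monotonicity of $\s$ and \eqref{defOfConstantR}, $0\le 1-\s(t)\le\varepsilon$.
\item The $l$-th derivative in $y$ is $\pm\tfrac12\alpha^l\s^{(l)}(t)$. Because $|\s^{(l)}|$ is decreasing on $[R,\infty)$, this is at most $\tfrac12\alpha^l|\s^{(l)}(\alpha/N)|\le\varepsilon/2$.
\end{itemize}
Thus $\|s_m-1\|_{W^{k,\infty}}\le\varepsilon$ on the one-dimensional interval. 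We also note that $0\le s_m\le1$, since $s_m$ is half a difference of values of the increasing function $\s$ (or of $\s$ and $\pm1$) in $[-1,1]$. Consequently $\|s_m\|_{W^{k,\infty}}\le 1+\varepsilon$, and $\|s_m\|_{L^\infty}\le1$.

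\textbf{Step 3 (recombination).} Write $e_i=s_{j_i}(x_i)-1$ and telescope:
\[
\prod_{i=1}^d(1+e_i)-1=\sum_{i=1}^d\Bigl(\prod_{l<i}s_{j_l}(x_l)\Bigr)e_i .
\]
Each summand is a product of functions of \emph{distinct} variables. For such products $D^\alpha$ splits into a product of one-dimensional derivatives, so the $W^{k,\infty}(I_j^N)$ norm is at most the product of the one-dimensional $W^{k,\infty}$ norms, with no Leibniz factor needed. Hence the error is at most
\[
\sum_{i=1}^d(1+\varepsilon)^{i-1}\varepsilon\le d(5/4)^{d-1}\varepsilon .
\]
For $k\ge1$ this is $\le 2^{dk}d\varepsilon$. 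For $k=0$ we use $\|s_m\|_{L^\infty}\le1$ instead, which gives $d\varepsilon$ directly. Alternatively, one can use the Banach algebra property (Lemma \ref{banachAlgebraProperty}), whose $2^k$ factors are exactly what produce the $2^{dk}$ in the statement.

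\textbf{Main obstacle.} I expect the only delicate point to be Step 2: checking that every tanh argument stays $\ge\alpha/N$ on the open cell, and that the derivative bound uses the monotonicity of $|\s^{(l)}|$ on $[R,\infty)$ guaranteed by the choice of $R$. This must also be checked uniformly over the boundary indices $m\in\{1,2,N-1,N\}$, where the telescoped sum has fewer tanh terms.
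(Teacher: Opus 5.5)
Your proof is correct. The paper gives no argument of its own here, since the lemma is quoted from \cite{de2021approximation}. Your skeleton is the standard route to that result: factorize the neighbour sum over coordinates, telescope the three adjacent $\rho$'s in one dimension, and control the two surviving $\tanh$ tails through the conditions \eqref{defOfConstantR} on $\alpha$ and the monotonicity of $|\s^{(l)}|$ on $[R,\infty)$.

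Where you genuinely differ is Step 3. The alternative is to recombine via the Banach algebra property (Lemma \ref{banachAlgebraProperty}), which costs a factor $2^k$ per product and is presumably where the $2^{dk}$ in the statement comes from. In your version, each summand $\bigl(\prod_{l<i}s_{j_l}(x_l)\bigr)e_i(x_i)$ is a tensor product and the paper's $W^{k,\infty}$ norm is of max type. So no Leibniz loss occurs, and you prove a stronger estimate than the one stated.

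You can sharpen it further. Since $0\le s_m\le 1$ and $|D^l s_m| = |D^l(s_m-1)|\le\varepsilon<1$ for $l\ge 1$, you actually have $\|s_m\|_{W^{k,\infty}}\le 1$ rather than $1+\varepsilon$. This gives the bound $d\varepsilon$ for every $k$.

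That sharper bound is also what your Banach-algebra aside needs. With $\|s_m\|_{W^{k,\infty}}\le 1+\varepsilon$, that route only gives $\sum_{i=1}^d\bigl(2^k(1+\varepsilon)\bigr)^{i-1}\varepsilon$. For fixed $k$ and $\varepsilon$ this exceeds $2^{dk}d\varepsilon$ once $d$ is large. With $\|s_m\|_{W^{k,\infty}}\le 1$ it gives at most $2^{k(d-1)}d\varepsilon$, which closes the argument.
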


The second lemma guarantees that the contribution of the functions far away from the cube $I_j^N$ is small.
\begin{lem}[Lemma 4.2 of \cite{de2021approximation}]
\label{lemma2_partitionsOfUnity}
Let $k \in \N_0$ and $v \in \mathbb{Z}^d$ with $v_i \geq 2$ for all $1 \leq i \leq N$. Then, it holds that
\begin{equation}
\WnormD{\Phi_{j+v}^{N}}{k}{\infty}{I_j^N} \leq \max \{1, (2k)^{2k} \alpha^k \} \, \varepsilon.
\end{equation}
\end{lem}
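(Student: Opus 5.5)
The plan is to exploit the tensor-product structure of $\Phi_{j+v}^N$. Since $\Phi_{j+v}^N(x)=\prod_{i=1}^d \rho^N_{j_i+v_i}(x_i)$ and each factor depends on a single variable, every mixed derivative factorizes:
\begin{equation}
D^\beta \Phi_{j+v}^N(x)=\prod_{i=1}^d \big(\rho^N_{j_i+v_i}\big)^{(\beta_i)}(x_i).
\end{equation}
So no Leibniz rule is needed. It suffices to bound each one-dimensional factor on the interval $\left(\fr{j_i-1}{N},\fr{j_i}{N}\right)$. I read the hypothesis as requiring that at least one coordinate, say $i_0$, satisfies $|v_{i_0}|\ge 2$; this is the meaning of $v\notin\mathcal{V}_d$. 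Every other coordinate gets only a crude bound.

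\textbf{Step 1: the far coordinate $i_0$.} Write $m=j_{i_0}+v_{i_0}$ and $y\in\left(\fr{j_{i_0}-1}{N},\fr{j_{i_0}}{N}\right)$, and treat $v_{i_0}\ge 2$; the case $v_{i_0}\le -2$ is symmetric via $\s(-x)=-\s(x)$. By \eqref{partitions_numero_uno}, $\rho_m^N$ is either a difference of two shifted tanh's, or, when $m=N$, $\tfrac12+\tfrac12\s(\cdot)$. Every tanh argument $a=\alpha(y-\tfrac{m-1}{N})$ or $\alpha(y-\tfrac{m}{N})$ satisfies $a\le -\alpha/N\le -R$.
\begin{itemize}[label={}]
\item For the values, write $\tfrac12\s(a)=\tfrac12(1+\s(a))-\tfrac12$, so that the constants cancel in differences (or are absorbed in the $\rho_N^N$ case). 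Since $1+\s(a)=1-\s(|a|)\le 1-\s(\alpha/N)\le\varepsilon$ by monotonicity and \eqref{defOfConstantR}, this gives $|\rho_m^N(y)|\le\varepsilon$.
\item For the derivatives of order $1\le n\le k$, each term is $\tfrac12\alpha^n\s^{(n)}(a)$. Because $|\s^{(n)}|$ is even and decreasing on $[R,\infty)$, we have $|\alpha^n\s^{(n)}(a)|\le\alpha^n|\s^{(n)}(\alpha/N)|\le\varepsilon$, so $|(\rho_m^N)^{(n)}|\le\varepsilon$.
\end{itemize}
Hence $\|\rho_m^N\|_{W^{k,\infty}}\le\varepsilon$ on that interval.

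\textbf{Step 2: the remaining coordinates.} Here I only use $|\rho^N_\ell|\le 1$ and, by Lemma~\ref{tanhProperty}, $|(\rho^N_\ell)^{(n)}|\le \alpha^n(2n)^{n+1}$ for $n\ge1$; the two halves each carry a factor $\tfrac12$.

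\textbf{Step 3: combine.} Fix $|\beta|\le k$. Multiplying the bounds from Steps 1 and 2 gives
\begin{equation}
|D^\beta\Phi_{j+v}^N|\le \varepsilon\prod_{i\ne i_0,\ \beta_i>0}(2\beta_i)^{\beta_i+1}\alpha^{\beta_i}.
\end{equation}
The number of nonzero $\beta_i$ is at most $|\beta|$. Therefore the exponent $\sum_{\beta_i>0}(\beta_i+1)$ is at most $2|\beta|\le 2k$, and each base is at most $2k$. This gives the factor $(2k)^{2k}\alpha^{|\beta|}$. When $\alpha\ge1$ (guaranteed by \eqref{defOfConstantR} for $N\ge1$, $R\ge1$), this is at most $(2k)^{2k}\alpha^k$. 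For $\beta=0$ the bound is just $\varepsilon$. Taking the maximum over $|\beta|\le k$ yields the factor $\max\{1,(2k)^{2k}\alpha^k\}\varepsilon$.

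The main obstacle is Step 1. It requires the careful sign bookkeeping that makes the $\pm\tfrac12$ constants cancel, the uniform handling of the boundary functions $\rho_1^N,\rho_N^N$, and the use of monotonicity of $|\s^{(n)}|$ beyond $R$. That monotonicity is needed to transfer the pointwise condition in \eqref{defOfConstantR}, which holds only at $\alpha/N$, to every point with $|a|\ge\alpha/N$.
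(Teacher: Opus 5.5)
Your argument is correct and is essentially the proof of Lemma 4.2 in \cite{de2021approximation}, which this paper only cites. You factorize $D^\beta \Phi_{j+v}^N$ coordinatewise, extract $\varepsilon$ from the single coordinate with $|v_{i_0}|\geq 2$ using \eqref{defOfConstantR} and the monotonicity of $|\s^{(n)}|$ on $[R,\infty)$, and bound the remaining factors crudely via Lemma~\ref{tanhProperty}. Your reading of the mistyped hypothesis as ``$|v_i|\geq 2$ for some $1\leq i\leq d$'' is the one actually needed in Step 3 of the main proof. The normalization $\alpha\geq 1$ that you flag (e.g.\ by taking $R\geq 1$) is harmless, since $\alpha$ is chosen large in \eqref{valueOfAlpha} anyway.
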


\subsection{Constructing the approximate partition of unity using EML trees}
{Next, we construct the $\rho_m^N$ functions for $2 \leq m \leq N-1$, according to the definition in \eqref{partitions_numero_uno}. All of the $\rho_m^N$ output values are larger than $0$, so no issue arises from feeding their output into subsequent EML functions. However, an issue can arise in the intermediary steps of the construction, wherein a negative number could result from the $\tanh$ functions. To circumvent this issue, we perform the following rearrangement of terms
\be\begin{aligned}
\label{decomposition_for_partitions_of_unity}
    \rho_m^N (y) &= \fr{1}{2} \s \left( \alpha \left( y - \fr{m-1}{N} \right) \right)  - \fr{1}{2} \s \left( \alpha \left(y - \fr{m}{N} \right) \right)\\
    &= \fr{1}{2} \underbrace{\left( \fr{e^{\alpha \left( y - \fr{m-1}{N} \right)}}{e^{\alpha \left( y - \fr{m-1}{N} \right)} + e^{-\alpha \left( y - \fr{m-1}{N} \right)}}
        - \fr{e^{\alpha \left( y - \fr{m}{N} \right)}}{e^{\alpha \left( y - \fr{m}{N} \right)} + e^{-\alpha \left( y - \fr{m}{N} \right)}} \right)}_\text{positive} \\
    &\qquad + \fr{1}{2} \underbrace{\left( \fr{e^{-\alpha \left( y - \fr{m}{N} \right)}}{e^{\alpha \left( y - \fr{m}{N} \right)} + e^{-\alpha \left( y - \fr{m}{N} \right)}} 
        - \fr{e^{-\alpha \left( y - \fr{m-1}{N} \right)}}{e^{\alpha \left( y - \fr{m-1}{N} \right)} + e^{-\alpha \left( y - \fr{m-1}{N} \right)}}. \right)}_{\text{positive}}
\end{aligned}\ee
As noted in the equation above, the first and second terms are positive, and thus we do not encounter any sign issues with this decomposition.}

\begin{figure}[htbp]
    \centering
    \includegraphics[width=0.99\linewidth]{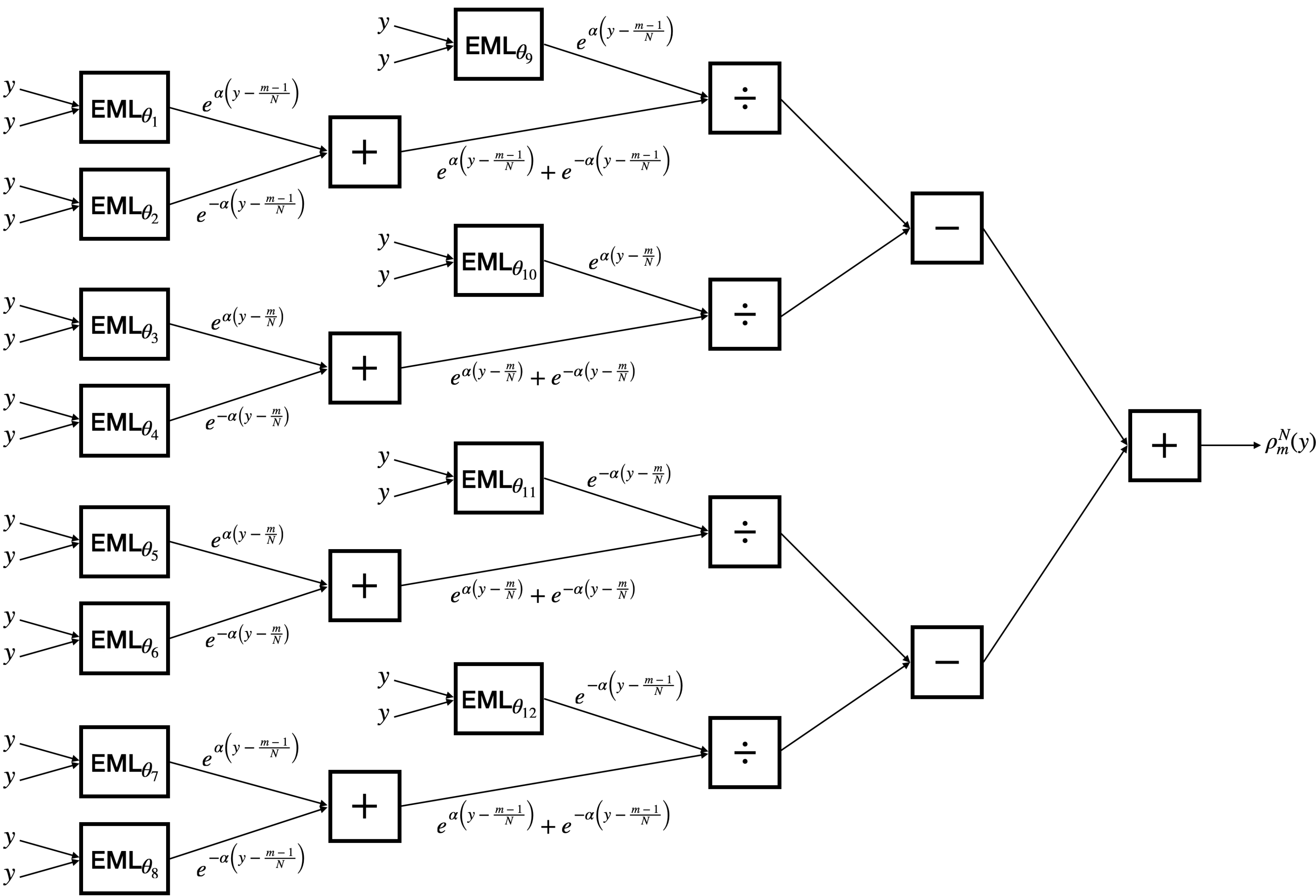}
    \caption{Construction of $\rho_m^N$ for $2\leq m \leq N-1$}
    \label{fig:partitionOfUnity}
\end{figure}

{This construction, represented in Figure \ref{fig:partitionOfUnity}, requires $12$ EML blocks, $5$ addition blocks (the last of which is chosen to have a $1/2$ prefactor), $4$ division block, and $2$ subtraction blocks. This boils down to a size of $12 + 5 \times 3 + 4 \times 6 + 2 \times 3 = 57$ and a depth of $1 + 2 + 4 + 2 + 2 = 11$.} 

{Similarly, to construct $\rho_1^N$, we rearrange the function so that we subtract positive numbers as before}
{\be\begin{aligned}
    \rho_1^N (y) &= \fr{1}{2} - \fr{1}{2} \s \left( \alpha \left( y - \fr{1}{N} \right) \right) \\
    &= \fr{1}{2} \underbrace{\left( 1 + \fr{e^{-\alpha \left( 1 - \fr{1}{N} \right)}}{e^{\alpha \left( 1 - \fr{1}{N} \right)} + e^{-\alpha \left( 1 - \fr{1}{N} \right)}} \right)}_\text{positive} - \fr{1}{2} \underbrace{\left( \fr{e^{\alpha \left( 1 - \fr{1}{N} \right)}}{e^{\alpha \left( 1 - \fr{1}{N} \right)} + e^{-\alpha \left( 1 - \fr{1}{N} \right)}} \right)}_\text{positive},
\end{aligned}\ee
and for $\rho_N^N$, we rewrite it as
\be\begin{aligned}
    \rho_N^N (y) &= \fr{1}{2} \s \left( \alpha \left( y - \fr{N-1}{N} \right) \right) + \fr{1}{2} \\
    &= \fr{1}{2} \underbrace{\left( 1 + \fr{e^{\alpha \left( y - \fr{N-1}{N} \right)}}{e^{\alpha \left( y - \fr{N-1}{N} \right)} + e^{-\alpha \left( y - \fr{N-1}{N} \right)}} \right)}_\text{positive} - \fr{1}{2} \underbrace{\fr{e^{-\alpha \left( y - \fr{N-1}{N} \right)}}{e^{\alpha \left( y - \fr{N-1}{N} \right)} + e^{-\alpha \left( y - \fr{N-1}{N} \right)}}}_\text{positive}.
\end{aligned}\ee
The size and depth of the EML blocks representing $\rho_1^N$ and $\rho_N^N$ can be upper bounded, for simplicity, by those for $\rho_m^N$ for $2 \leq m \leq N-1$.}

Then, we construct $\Phi_j^{N}$, using \eqref{partition_of_unity_def} by multiplying $d$ instances of the one-dimensional partitions $\rho_{j_i}^N$, {which doesn't cause any issues due to the positivity of the $\rho_m^N$ functions. This requires $d-1$ multiplication blocks, and thus the EML tree corresponding to $\Phi_j^{N}$ has size $57 d + 6 (d-1) = 63d - 6$ and depth $11 + 4(d-1) = 4d + 7$, yielding the following lemma:
\begin{lem}[Construction of Approximate Partitions of Unity]
\label{lemma_partitionsOfUnityEML}
    For $N,d \in \N$ and $j \in \mathcal{A}^N$, there exists an EML tree $\Phi_{j, \theta}^N$ such that $\Phi_{j, \theta}^N = \Phi_j^{N}$ with
    \begin{equation}
        N(\Phi_{j, \theta}^N) \leq 63d - 6, \quad \text{ and } \quad \text{Depth} (\Phi_{j, \theta}^N) \leq 4d + 7.
    \end{equation}
\end{lem}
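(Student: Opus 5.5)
The statement to prove is Lemma \ref{lemma_partitionsOfUnityEML}. The plan is a direct construction. First build an EML tree for each one-dimensional factor $\rho_{j_i}^N(x_i)$, then multiply the $d$ factors using the multiplication block of Figure \ref{fig:multiplication}. The one requirement throughout is that every intermediate quantity entering a logarithm, and every input to the addition, multiplication and division blocks, is strictly positive. This is needed because the blocks of Table \ref{tab:binary_ops} are only valid for positive inputs.

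\textbf{One-dimensional factors, $2 \le m \le N-1$.} I would use the decomposition \eqref{decomposition_for_partitions_of_unity}. The four exponentials $e^{\pm\alpha(y - (m-1)/N)}$ and $e^{\pm\alpha(y - m/N)}$ are produced by single atoms with $a_i=1$, $b_i=\pm\alpha$, $c_i=\mp\alpha(m-1)/N$ or $\mp\alpha m/N$, and $d_i=0$. The remaining exponentials needed (duplicates feeding the denominators, since a tree cannot share subtrees) are produced the same way, giving $12$ atoms in total. These outputs are positive.

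\begin{itemize}
\item Add pairs of exponentials to form the two denominators. The inputs are positive sums, so the addition block is legitimate.
\item Form the four quotients with division blocks. Each quotient lies in $(0,1)$, so it is positive.
\item Take the two differences with subtraction blocks. The first bracket is positive because $t\mapsto e^{t}/(e^t+e^{-t}) = \tfrac12(1+\sigma(t))$ is increasing and $\alpha(y-(m-1)/N) > \alpha(y-m/N)$ (using $\alpha>0$). The second bracket is positive by the symmetric argument with $\tfrac12(1-\sigma(t))$, which is decreasing.
\item Combine the two positive brackets with a final addition block whose outer atom carries the prefactor $1/2$.
\end{itemize}

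Counting as in the text gives size $12+5\cdot 3+4\cdot 6+2\cdot 3=57$. The longest path passes through an exponential atom, an addition, a division, a subtraction and the final addition, giving depth $1+2+4+2+2=11$.

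\textbf{Boundary factors $\rho_1^N$ and $\rho_N^N$.} I would use the analogous rearrangements $\rho_1^N = \tfrac12(1-\sigma)$ and $\rho_N^N = \tfrac12(1+\sigma)$, again written as differences of positive quantities. These need strictly fewer operations, and pruned subtrees can be padded or simply bounded, so their size and depth are at most $57$ and $11$. Each $\rho_m^N$ is strictly positive, since $\sigma$ is strictly increasing and $|\sigma|<1$.

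\textbf{Product.} Multiplying $d$ positive factors with $d-1$ multiplication blocks is legitimate. It gives size $57d+6(d-1)=63d-6$. If the product is arranged as a chain, the depth is $11+4(d-1)=4d+7$. A balanced arrangement would give an even smaller depth.

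\textbf{Main obstacle.} The step I expect to need the most care is the positivity bookkeeping, specifically verifying the sign of each subtracted pair. I would handle it through the monotonicity of $\sigma$ as above.
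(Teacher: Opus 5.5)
Your proposal is correct and follows the paper's construction essentially step for step. It uses the same sign-based decomposition of $\rho_m^N$ into two positive brackets, the same count $12+5\cdot 3+4\cdot 6+2\cdot 3=57$ with depth $1+2+4+2+2=11$, the same bound of the boundary factors $\rho_1^N,\rho_N^N$ by the interior case, and the same chain of $d-1$ multiplication blocks giving $63d-6$ and $4d+7$; your monotonicity argument via $\tfrac12(1\pm\sigma)$ simply makes explicit the positivity of the brackets, which the paper only asserts.
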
}

\section{Proof of Theorem \ref{main_theorem}}
\label{appendix:proofOfMainTheorem}
\begin{proof}
We set out to perform the steps for the proof as described in the main text above.

{\bf{Step 1. Approximation by Local Polynomials.}}
For a given $N \in \N$ and $j \in \mathcal{A^N}$, we recall the definitions of $I_j^N$ and $J_j^N$ from \eqref{definitionOfCubes}.
We calculate that the diameter of $J_j^N$ is $\fr{3 \sqrt{d}}{N}$. We notice that there exists a ball of diameter $3/N$ such that $J_j^N$ is star-shaped with respect to every point in this ball (per Definition \ref{starShapedSet}). Thus, for every $j \in \mathcal{A}^N$, by the Bramble-Hilbert lemma (Lemma \ref{BrambleHilbert}), there exists a polynomial $p_j^N$ of degree at most $s-1$ such that, for all $0 \leq k \leq s-1$
\be\begin{aligned}
    \WnormD{f - p_j^N}{k}{\infty}{J_j^N} &\leq \fr{\pi^{1/4} \sqrt{s}}{(s-k-1)!} \left(\fr{5 d^2}{N} \right)^{s-k} \, |f|_{W^{s,\infty} ([0,1]^d)} \\
    &\leq \max_{0 \leq m \leq k} \fr{\pi^{1/4} \sqrt{s} (5d^2)^{s-m}}{(s-m-1)!} \fr{|f|_{W^{s, \infty} ([0,1]^d)}}{N^{s-k}} \coloneqq \fr{\mathcal{C} (d, k, s, f)}{N^{s-k}},
\end{aligned}\ee
choosing $N > 5d^2$ (to obtain decay) and using $3 \sqrt{e} \leq 5$.

\begin{rem}
\label{rem:overlappingCubes}
    The reason why the polynomials are chosen to be good approximations on the overlapping cubes (not on the disjoint ones) becomes evident in the estimates \eqref{fTildebyPolynomialLInfty} and \eqref{fTildebyPolynomialInSobolev}, allowing us to control the terms $\LnormD{f - p_{i+v}^N}{\infty}{I_i^N}$ and $\WnormD{f - p_{i+v}^N}{k}{\infty}{I_i^N}$ with $i \in \mathcal{A}^N$ and $v \in \mathcal{V}_d$, using the fact that $I_i^N \subset J_{i+v}^N$ and bounding the norms. 
\end{rem}

{\bf{Step 2. Definition of the Global Polynomial.}} We then define the global piecewise polynomial over the entire domain $[0,1]^d$ by
\begin{equation}
    p^N = \sum_{j \in \mathcal{A}^N} p_j^N \chi_j,
\end{equation}
where $\chi_j$ denotes the indicator function on $I_j^N$. Since $p^N$ is composed using indicator functions (which are discontinuous), it cannot be adequately controlled in $W^{k, \infty}$ spaces where $k \geq 1$. This is why we introduce
\begin{equation}
    \tilde{p}^N = \sum_{j \in \mathcal{A}^N} p_j^N \Phi_j^{N},
\end{equation}
as the global polynomial using the approximate partitions of unity and which we will ultimately represent using an EML tree. Similarly, we define $\tilde{f}^N$ as
\begin{equation}
    \tilde{f}^N = \sum_{j \in \mathcal{A}^N} f \, \Phi_j^{N}.
\end{equation}

{\bf{Step 3. Approximation of $f$ by $\tilde{f}^N$.}} We first prove that $\tilde{f}^N$ is close to $f$. Considering an arbitrary $i \in \mathcal{A}^N$ and any $k \geq 1$, we employ the property \ref{banachAlgebraProperty}, Lemmas \ref{lemma1_partitionsOfUnity} and \ref{lemma2_partitionsOfUnity}, and the explicit value of $\alpha$ (from \eqref{valueOfAlpha}) to get
\be\begin{aligned}
    &\WnormD{f - \tilde{f}^N}{k}{\infty}{I_i^N} = \WnormD{f - \sum_{j \in \mathcal{A}^N} f \, \Phi_j^{N}}{k}{\infty}{I_i^N} \\
    &\quad \leq 2^k \WnormD{f}{k}{\infty}{I_i^N} \WnormD{1 - \sum_{v \in \mathcal{V}_d} \Phi_{i+v}^{N}}{k}{\infty}{I_i^N} + 2^k \WnormD{f}{k}{\infty}{I_i^N} \WnormD{\sum_{\substack{j \in \mathcal{A^N} \\ j-i \notin \mathcal{V}_d}} \Phi_j^{N}}{k}{\infty}{I_i^N} \\
    &\quad \leq 2^k \WnormD{f}{k}{\infty}{I_i^N} \left(2^{kd}d \varepsilon + N^d (2k)^{2k} \alpha^k \varepsilon \right) \\
    &\quad \leq 2^k \WnormD{f}{k}{\infty}{I_i^N} \left(2^{kd}d + N^d (2k)^{2k} N^k (k+1)^k \max \left\{R^k, \ln^k \left( \fr{2Nk^2}{\varepsilon^\fr{1}{k+1} e} \right) \right\} \right) \varepsilon \\
    &\quad \leq \delta (2(k+1))^{3k} \max \left\{R^k, \ln^k \left( \fr{2Nk^2}{\varepsilon^\fr{1}{k+1} e} \right) \right\} \fr{\mathcal{C}(d,k,s,f)}{N^{s-k}},
\end{aligned}\ee
with $\varepsilon$ chosen to satisfy
\begin{equation}
    \varepsilon \leq \fr{\delta \mathcal{C} (d,k,s,f)}{2^{(k+1) d} d N^{s+d} \, \WnormO{f}{k}{\infty}}.
\end{equation}
Similarly, for $k=0$, we have
\be\begin{aligned}
    \LnormD{f - \tilde{f}^N}{\infty}{I_i^N} \leq \LnormD{f}{\infty}{I_i^N} (d\varepsilon + N^d \varepsilon) \leq \fr{\delta}{3} \fr{\mathcal{C}(d,k,s,f)}{N^{s-k}}.
\end{aligned}\ee

{\bf{Step 4. Approximation of $\tilde{f}$ by the Global Polynomial $\tilde{p}^N$.}}
We note that for $k = 0$, we employ properties of the $L^p$ norm and Lemmas \ref{lemma1_partitionsOfUnity} and \ref{lemma2_partitionsOfUnity} to obtain
\be\begin{aligned}
\label{fTildebyPolynomialLInfty}
    &\LnormD{\tilde{f}^N - \tilde{p}^N}{\infty}{I_i^N} 
    = \LnormD{\sum_{j \in \mathcal{A}^N} (f - p_j^N) \, \Phi_j^N}{\infty}{I_i^N} \\
    &\qquad = \LnormD{\sum_{v \in \mathcal{V}_d} (f - p_{i + v}^N) \, \Phi_{i + v}^N}{\infty}{I_i^N} 
    + \sum_{\substack{j \in \mathcal{A}^N \\ j-i \notin \mathcal{V}_d}} \LnormD{f - p_j^N}{\infty}{I_i^N} \LnormD{\Phi_j^N}{\infty}{I_i^N} \\
    &\qquad \leq \left( \max_{v \in \mathcal{V}_d} \LnormD{f - p_{i+v}^N}{\infty}{I_i^N} \right) \LnormD{\sum_{v \in \mathcal{V}_d} \Phi_{i+v}^N}{\infty}{I_i^N} + N^d \mathcal{C}(d,k,s,f) \varepsilon \\
    &\qquad \leq \fr{\mathcal{C} (d,k,s,f)}{N^{s-k}}(1 + d\varepsilon) + N^d \mathcal{C}(d,k,s,f) \varepsilon \\
    &\qquad \leq \left( 1 + \fr{\delta}{3} \right) \fr{\mathcal{C} (d,k,s,f)}{N^{s-k}},
\end{aligned}\ee
with suitably chosen $\varepsilon$.

Whereas for $1 \leq k < s$, we have
\be\begin{aligned}
\label{fTildebyPolynomialInSobolev}
    &\WnormD{\tilde{f}^N - \tilde{p}^N}{k}{\infty}{I_i^N} 
    = \WnormD{\sum_{j \in \mathcal{A}^N} (f - p_j^N) \, \Phi_j^N}{k}{\infty}{I_i^N} \\
    &\qquad \leq \WnormD{\sum_{v \in \mathcal{V}_d} (f - p_{i + v}^N) \, \Phi_{i + v}^N}{k}{\infty}{I_i^N} 
    + \sum_{\substack{j \in \mathcal{A}^N \\ j-i \notin \mathcal{V}_d}} 2^k \LnormD{f - p_j^N}{\infty}{I_i^N} \WnormD{\Phi_j^N}{k}{\infty}{I_i^N}.
\end{aligned}\ee
To bound the first term, we invoke the general Leibniz rule to get
\be\begin{aligned}
    D^\beta \left( \sum_{v \in \mathcal{V}_d} (f - p_{i+v}^N) \, \Phi_{i+v}^N \right) \leq \sum_{\beta' \leq \beta} \binom{\beta}{\beta'} \sum_{v \in \mathcal{V}^d} \left| D^{\beta'} (f - p_{i+v}^N) \right| \left| D^{\beta - \beta'} \Phi_{i+v}^N \right|,
\end{aligned}\ee
with $x \in I_i^N$. Defining $l = |\beta - \beta'|$ for $\beta' \leq \beta$, the following holds for every $v \in \mathcal{V}_d$,
\begin{equation}
    \left| D^{\beta'} (f - p_{i+v}^N) \right| \leq \WnormD{f - p_{i+v}^N}{k-l}{\infty}{I_i^N} \leq \fr{\mathcal{C}(d, k-l, s, f)}{N^{s-k+l}},
\end{equation}
and, using Lemma \ref{tanhProperty} and the explicit value of $\alpha$ (in \eqref{valueOfAlpha}), we have
\begin{equation}
    \left| D^{\beta - \beta'} \Phi_{i+v}^N \right| \leq \alpha^l (2l)^{2l} = N^l (2l)^{2l} \max \left\{R^l, \ln^l \left( \fr{(2k)^{k+1} (Nk)^k}{e^k \varepsilon}\right) \right\}.
\end{equation}
Thus, the first term of \eqref{fTildebyPolynomialInSobolev} can be estimated as 
\begin{equation}
    \WnormD{\sum_{v \in \mathcal{V}_d} (f - p_{i + v}^N) \, \Phi_{i + v}^N}{k}{\infty}{I_i^N} \leq 2^k 3^d \fr{\mathcal{C}(d, k, s, f)}{N^{s-k}} (2k)^{2k} \max \left\{R^k, \ln^k \left( \fr{(2Nk^2)^{k+1}}{e^k \varepsilon}\right) \right\}, \nonumber
\end{equation}
which helps simplify \eqref{fTildebyPolynomialInSobolev} to
\be\begin{aligned}
    \WnormD{\tilde{f}^N - \tilde{p}^N}{k}{\infty}{I_i^N} &\leq 2^k 3^d \fr{\mathcal{C}(d, k, s, f)}{N^{s-k}} (2k)^{2k} \max \left\{R^k, \ln^k \left( \fr{(2Nk^2)^{k+1}}{e^k \varepsilon}\right) \right\} \\
    &\qquad + N^d 2^k \mathcal{C}(d,k,s,f) (2k)^{2k} N^k (k+1)^k \left(\fr{2Nk^2}{e} \right)^{k/2} \sqrt{\varepsilon} \\
    &\leq 3^d \left( 1 + \fr{\delta}{3} \right) (2(k+1))^{3k} \max 
    \left\{ R^k, \ln^k \left( \fr{2N k^2}{\varepsilon^\fr{1}{k+1} e} \right) \right\} \fr{\mathcal{C} (d,k,s,f)}{N^{s-k}},
\end{aligned}\ee
where we choose 
\begin{equation}
    \varepsilon \leq \fr{\delta^2}{N^{2s+2d+k} k^k},
\end{equation}
and assume that $0 < \delta < 5/6$.

{\bf{Step 5. Constructing the EML Tree.}}
In this step, our objective is to construct an EML tree $\tilde{p}_\theta^N$ such that it is exactly equal to the approximate global polynomial, i.e. $\tilde{p}_\theta^N = \tilde{p}^N$, and find an upper bound on the size and depth of such trees.

Keep in mind that, in view of the theorem statement, our goal here is to establish an approximation result on $(0,1]^d$. Accordingly, the EML tree constructed in this step is only defined on this domain, since zeros in the input may produce singularities through the logarithmic components appearing in the definition of the EML.

{We cannot directly appeal to Lemma \ref{lemma_multivariatePolynomial} or even Lemma \ref{lemma_multivariatePolynomial_with_decomposition} in constructing the EML tree representing $\tilde{p}^N$ because the individual polynomials $p_j^N$ (for $j \in \mathcal{A}^N$) could be negative, which then prevents us from feeding it into a subsequent multiplication block, whose first layer contains logarithms for both inputs. Thus, we appeal to a sign-based decomposition as follows: we rewrite $\tilde{p}^N$ as
\begin{equation}
\label{decomposition_final_EML_tree}
    \tilde{p}^N(x) = \sum_{j \in \mathcal{A}^N} p_j^N \Phi_j^N = \sum_{j \in \mathcal{A}^N} \left( p_{j, +}^N - p_{j, -}^N \right) \Phi_j^N = \sum_{j \in \mathcal{A}^N} p_{j,+}^N \Phi_j^N - \sum_{j \in \mathcal{A}^N} p_{j,-}^N \Phi_j^N,
\end{equation}
where, similarly to \eqref{polynomial_decomposition}, we define
\begin{equation}
    p_{j,+}^N (x) = \sum_{\substack{|\alpha| \leq s-1 \\ c_\alpha > 0}} c_{\alpha} x_1^{\alpha_1} x_2^{\alpha_2} \cdots x_d^{\alpha_d}, \quad \text{ and } \quad p_{j,-}^N (x) = \sum_{\substack{|\alpha| \leq s-1 \\ c_\alpha < 0}} |c_{\alpha}| x_1^{\alpha_1} x_2^{\alpha_2} \cdots x_d^{\alpha_d}.
\end{equation}
for $x \in (0,1]^d$.}

{Thus, for every $j \in \mathcal{A}^N$, we construct $p_{j,+}^N$ and $p_{j,-}^N$ through Lemma \ref{lemma_multivariatePolynomial}, obtaining the exact EML representations $p_{j,+,\theta}^N$ and $p_{j,-,\theta}^N$. And, for each $\Phi_j^N$, we can construct an associated EML tree $\Phi_{j,\theta}^N$ using Lemma \ref{lemma_partitionsOfUnityEML}. Each term $p_{j,+,\theta}^N \, \Phi_{j,\theta}^N$ has size 
\be\begin{aligned}
    N(p_{j,+,\theta}^N \, \Phi_{j,\theta}^N) &= \underbrace{(8d - 3) \binom{d+s-1}{s-1} - 3}_\text{for $p_{j,+,\theta}^N$} + \underbrace{(63d - 6)}_\text{for $\Phi_{j, \theta}^N$} + \underbrace{6}_\text{for $\times$} \\
    &=(8d - 3) \binom{d+s-1}{s-1} + 63d - 3,
\end{aligned}\ee
and similarly for $N(p_{j,-,\theta}^N \, \Phi_{j,\theta}^N)$.}

{We need $|\mathcal{A}^N| = N^d$ such terms and $N^d - 1$ addition blocks for each summation and a subtraction block to get $\tilde{p}_\theta^N$, thus yielding
\begin{equation}
\begin{aligned}
    N(\tilde{p}_\theta^N) &\leq \underbrace{2 N^d \left( (8d - 3) \binom{d+ s-1}{s-1} + 63d - 3 \right)}_{\text{individual terms}} + \underbrace{2 \times 3 (N^d - 1)}_{\text{sum $N^d$ terms}} + \underbrace{3}_\text{for $-$ block} \\
    &= 2N^d \left( (8d - 3) \binom{d+ s-1}{s-1} + 63d \right) - 3.
\end{aligned}
\end{equation}
The depth of such a tree can be found by noting that the longest path consists of passing through $p_{\{1,\dots, 1\}, \pm, \theta}^N$ or $\Phi_{\{1, \dots, 1\}, \theta}^N$ (depending on which is deeper), a multiplication block, $N^d - 1$ addition blocks, and a subtraction block, giving the following
\be\begin{aligned}
    \text{Depth}(\tilde{p}_\theta^N) &\leq \underbrace{4d + \max \left\{ 2 \binom{d+s-1}{s-1} - 4, 7 \right\}}_{\text{passing through $p_{\{1,\dots, 1\}, \theta}^N$ or $\Phi_{\{1, \dots, 1\}, \theta}^N$}} + \underbrace{4}_{\text{$\times$ block}} + \underbrace{2(N^d - 1)}_{\text{$+$ blocks}} + \underbrace{2}_\text{$-$ block} \\
    &= 4d + \max \left\{ 2 \binom{d+s-1}{s-1} - 4, 7 \right\} + 2N^d + 4,
\end{aligned}\ee
validating the upper bounds provided in the statement of the theorem.}

Evidently, since $\tilde{p}_\theta^N = \tilde{p}^N$, then on $(0,1]^d$, we have that
\begin{equation}
    \WnormD{\tilde{p}^N - \tilde{p}_\theta^N}{k}{\infty}{(0,1]^d} = 0.
\end{equation}

{\bf{Step 6. Conclusion.}}
Finally, for $k=0$, we can combine, via the triangle inequality, the previous estimates to obtain
\be\begin{aligned}
    \LnormD{f - \tilde{p}_\theta^N}{\infty}{(0,1]^d} \leq \left(1 + \fr{\delta}{3} \right) \fr{\mathcal{C} (d, 0, s, f)}{N^s},
\end{aligned}\ee
where we have chosen $\varepsilon$ to satisfy
\begin{equation}
\label{epsilonChoice}
    \varepsilon = \fr{\delta^2 \min \{1, \sqrt{\mathcal{C} (d,k,s,f)} \}}{N^{2s+2d+k} k^k 2^{(k+1)d} d \max \left\{1, \WnormO{f}{k}{\infty}^{1/2} \right\}}.
\end{equation}

Additionally, for $1 \leq k < s$, \eqref{epsilonChoice} implies that
\begin{equation}
    \varepsilon^\fr{1}{k+1} \geq \fr{\delta \min \{1, \sqrt{\mathcal{C} (d,k,s,f)}\}}{N^{s+d+1} k 2^d \sqrt{d} \max \left\{1, \WnormO{f}{k}{\infty}^{1/2} \right\}}.
\end{equation}
Therefore, we obtain the bound
\be\begin{aligned}
    \WnormD{f - \tilde{p}_\theta^N}{k}{\infty}{(0,1]^d}
        % &\qquad \leq \WnormO{f - \tilde{f}^N}{k}{\infty} + \WnormO{\tilde{f}^N - \tilde{p}^N}{k}{\infty} + \WnormO{\tilde{p}^N - \tilde{p}_\theta^N}{k}{\infty} \\
        % &\qquad \leq \WnormO{f - \tilde{f}^N}{k}{\infty} + \WnormO{\tilde{f}^N - \tilde{p}^N}{k}{\infty} \\
    &\leq 3^d (1+\delta) (2(k+1))^{3k} \max \left\{R^k, \ln^k \left( \beta N^{s+d+2} \right) \right\} \fr{\mathcal{C}(d,k,s,f)}{N^{s-k}},
\end{aligned}\ee
where we define $\beta$ to be
\begin{equation}
    \beta = \fr{k^3 2^d \sqrt{d} \max \left\{1, \WnormO{f}{k}{\infty}^{1/2} \right\}}{\delta \min \{1, \sqrt{\mathcal{C}(d,k,s,f)}\}}.
\end{equation}

This concludes the proof of Theorem \ref{main_theorem}.
\end{proof}

\section{Proof of Corollary \ref{main_corollary}}
\label{proofOfCorollary}
\begin{proof}
Let $\eta > 0$.
Choose $\gamma \in (0,1)$, to be specified later, and define $T_\gamma : [0,1]^d \to (0,1]^d$ by
\begin{equation}
    T_\gamma (x) \coloneqq (1 - \gamma)x + \fr{\gamma}{2} \mathbf{1}, 
\end{equation}
where $\mathbf{1} = (1, \dots, 1)$. Crucially, we observe that
\begin{equation}
    T_\gamma([0,1]^d)
    =
    \left[\frac{\gamma}{2},\,1-\frac{\gamma}{2}\right]^d
    \subset (0,1)^d.
\end{equation}

Moreover,
\begin{equation}
    \sup_{x \in [0,1]^d} |T_\gamma(x) - x| = \sup_{x \in [0,1]^d}
    \left|
        -\gamma x + \frac{\gamma}{2}\mathbf{1}
    \right|
    \leq C_d \gamma,
\end{equation}
so that $T_\gamma \to \mathrm{Id}$ uniformly as $\gamma \to 0$. In particular, since $f \in W^{s, \infty} ([0,1]^d)$ with $s > k$, for every multi-index $|\alpha| \leq k$, $D^\alpha f$ is Lipschitz. Hence,
\begin{equation}
    \LnormD{D^\alpha f - (D^\alpha f) \circ T_\gamma}{\infty}{[0,1]^d} \leq \LnormD{\na D^\alpha f}{\infty}{[0,1]^d} \, \LnormD{T_\gamma - \mathrm{Id}}{\infty}{[0,1]^d} \leq C_{f,k} \, \gamma.
\end{equation}
We can choose $\gamma > 0$ sufficiently small to get
\begin{equation}
    \WnormD{f - f \circ T_\gamma}{k}{\infty}{[0,1]^d} \leq \fr{\eta}{2}.
\end{equation}
By Theorem \ref{main_theorem}, there exists an $\operatorname{EML}_\theta$ tree $p_\theta$ such that
\begin{equation}
    \WnormD{f - p_\theta}{k}{\infty}{(0,1]^d} \leq \fr{\eta}{2}.
\end{equation}
We define the map $p_{\gamma, \theta}$ as
\begin{equation}
    p_{\gamma, \theta} (x) = p_\theta (T_\gamma(x)),
\end{equation}
which is well-defined on $[0,1]^d$. Since $T_\gamma([0,1]^d)\subset(0,1)^d$, we have, for any $g \in W^{k,\infty})[0,1]^d)$, that 
\begin{equation}
\|g\circ T_\gamma\|_{W^{k,\infty}([0,1]^d)}
\le
\|g\|_{W^{k,\infty}((0,1]^d)},
\end{equation}
and hence
\begin{equation}
\|f\circ T_\gamma - p_\theta\circ T_\gamma\|_{W^{k,\infty}([0,1]^d)}
\le
\|f - p_\theta\|_{W^{k,\infty}((0,1]^d)}
\le \frac{\eta}{2}.
\end{equation}

Finally, combining the previous estimates, we obtain
\be\begin{aligned}
    \WnormD{f - p_{\gamma, \theta}}{k}{\infty}{[0,1]^d} &= \WnormD{f - p_\theta \circ T_\gamma}{k}{\infty}{[0,1]^d} \\
    &\leq \WnormD{f - f \circ T_\gamma}{k}{\infty}{[0,1]^d} + \WnormD{f \circ T_\gamma - p_\theta \circ T_\gamma}{k}{\infty}{[0,1]^d} \\
    &\leq \fr{\eta}{2} + \fr{\eta}{2} = \eta.
\end{aligned}\ee
Naively, to construct $p_{\gamma, \theta}$ as an EML tree, we alter the inputs, since for each $i \in \{1, \dots, d\}$, we have to construct the map $x_i \mapsto (1-\gamma) x_i + \gamma/2$. One such map requires 2 EML atoms with appropriate parameters. {We have to construct such blocks for all used instance of the input variables in the EML tree (i.e. where their multiplicative prefactor is not set to $0$), since blocks in an EML tree are not reusable and so they cannot be fed into two distinct inputs.}

{We now count all used appearances of the inputs in the EML tree $p_\theta$, in order to substitute $x_i$ by $T_\gamma (x_i)$. For each $j \in \mathcal{A}^N$, we have to construct $p_{j, +, \theta}^N$, $p_{j, -, \theta}^N$, and 2 of $\Phi_{j, \theta}^N$. Additionally, we know that for each
\begin{itemize}
    \item $p_{j, \pm, \theta}^N$: there are $d \binom{d+s-1}{s-1}$ instances of the input variables in polynomial block,
    \item $\Phi_{j, \theta}^N$: there are 12 used appearances of the inputs in each $\rho_{j_i}^N (x_i)$ block (as seen in Figure \ref{fig:partitionOfUnity}), of which there are $d$ in each $\Phi_{j, \theta}^N$.
\end{itemize}
Thus, summing all these contributions together, we notice that we need $2N^d \left( d \binom{d+s-1}{s-1} + 12d \right)$ substitutions of $x_i \mapsto T_\gamma$, each requiring $2$ EML blocks. This produces the upper bound on $N(p_{\gamma, \theta})$ in \eqref{bound_for_corollary}.}

{On the other hand, the depth of $p_{\gamma, \theta}$ is only increased by $2$ EML blocks relative to $p_\theta$.}
% thus requiring an additional $2 \times 2 d N^d \binom{d+s-1}{s-1}$ EML atoms to the size of $p_\theta$, 

This concludes the proof of Corollary \ref{main_corollary}.
\end{proof}

\end{document}